\documentclass[11pt]{article}

\usepackage[final]{acl}

\usepackage{times}
\usepackage{latexsym}

\usepackage[T1]{fontenc}

\usepackage[utf8]{inputenc}

\usepackage{microtype}

\usepackage{inconsolata}

\usepackage{graphicx}
\usepackage{hyperref}
\usepackage{url}
\usepackage{algorithm}
\usepackage{algorithmic}
\usepackage{subcaption}
\usepackage{multirow}
\usepackage{booktabs}
\usepackage[table]{xcolor}
\usepackage{colortbl}
\usepackage{amsmath}
\usepackage{amsthm}
\usepackage{amsfonts}
\newtheorem{theorem}{Theorem}
\newtheorem{lemma}{Lemma}
\usepackage{graphicx}
\usepackage{enumitem}
\setlist[itemize]{leftmargin=*}

\title{Toward Robust In-Context Learning: Leveraging Out-of-distribution Proxies for Target Inaccessible Demonstration Retrieval}

\author{
Hao Xu$^{1}$, Rite Bo$^{1}$, Fausto Giunchiglia$^{1,2}$, Yingji Li$^{1}$, Rui Song$^{1}$\thanks{Corresponding author.} \\
$^{1}$College of Computer Science and Technology, Jilin University, China \\
$^{2}$Department of Information Engineering and Computer Science, University of Trento, Italy \\
\texttt{
\{xuhao,yingjili,songrui\}@jlu.edu.cn,} \\
\texttt{
bort24@mails.jlu.edu.cn, fausto.giunchiglia@unitn.it
}
}

\begin{document}
\maketitle
\begin{abstract}
Although studies have demonstrated that Large Language Models (LLMs) can perform well on Out-of-Distribution (OOD) tasks, their advantage tends to diminish as the distribution shift becomes more severe. Consequently, researchers aim to retrieve distributionally similar and informative demonstrations from the available source domain to boost the inference capabilities of LLMs. However, in practical scenarios where the target domain is inaccessible, evaluating the unknown distribution is challenging, which indirectly impacts the quality of the selected demonstrations. To address this problem, we propose \textbf{DOPA}, a demonstration search framework that incorporates an OOD proxy to approximate the inaccessible target domain and guide the retrieval process. Building on proxy-based evaluation, DOPA further introduces a Mahalanobis distance-based global diversity constraint to ensure sufficient diversity among the retrieved demonstrations. Experimental results on multiple LLMs and tasks demonstrate that DOPA effectively enhances robustness in OOD settings\footnote{https://github.com/bort64/ood\_code}.
\end{abstract}

\section{Introduction}
Large language models (LLMs) have achieved strong performance across diverse NLP tasks~\cite{ChangWWWYZCYWWYZCYYX24, SongLTWGX25}, with in-context learning (ICL) emerging as a widely used prompting paradigm~\cite{MinLHALHZ22}. By providing a small set of demonstrations, ICL can effectively guide model reasoning and prediction. However, recent studies show that LLM performance degrades markedly under out-of-distribution (OOD) settings~\cite{YuanCCGZCJL023, 00020Y025}, particularly when demonstrations are distributionally mismatched with the target domain, motivating research into more robust demonstration selection strategies.

Retrieval~\cite{Man2024Survey} and augmentation~\cite{0004LHZLTCM24} are two commonly used approaches for obtaining effective samples. The former searches for the most relevant examples within a specific domain, while the latter rewrites existing samples to reduce their discrepancy with the target instance. Demonstration retrieval relies on a retriever. Some off-the-shelf metrics, such as Bm25~\cite{AgrawalZLZG23}, sentence encoder-based similarity~\cite{LiuSZDCC22}, model influence~\cite{PengDY00OT24, Vinay2024}, and misconfidence~\cite{Shangqing2024}, can support general-purpose retrieval strategies. Meanwhile, other approaches aim to train a dense retriever to obtain more task-relevant retrieval results~\cite{ChengHBZLW0WDZ23, LiLYLZNXWQ23}. Augmentation, on the other hand, focuses on adapting existing samples to better match the distributional characteristics of the target instance~\cite{Kyle2024, Madine24}. \textit{\textbf{However}, in real-world applications, an inaccessible target domain hinders the ability to obtain domain-aligned demonstrations, often resulting in degraded performance~\cite{0008GLTX24}.}

To address the aforementioned challenge, we propose a \textbf{d}emonstration \textbf{o}ptimization framework based on OOD \textbf{p}roxy \textbf{a}ssessment (termed \textbf{DOPA}). This framework quantifies the utility of source-domain samples in the absence of target-domain access, and leverages the quantification results to guide demonstration retrieval. At its core, DOPA introduces an OOD proxy as a principled approximation to the unknown target distribution~\cite{zhang2022falsehoods}, which is composed of two components: a source proxy and a target proxy. The source proxy is defined as an instruction-tuned LLM trained on the source domain to fully adapt to the source distribution, while the target proxy corresponds to the original, unmodified version of the same LLM. The perplexity ratio between their predictions on identical input samples is adopted as the OOD score for those samples~\cite{NalisnickMTGL19}. This OOD score serves to estimate the degree of familiarity of source-domain samples with the target domain in the absence of target-domain information. It is further integrated with representational similarity to predicted samples for candidate selection. The validity of the OOD score is theoretically supported through a bounded proxy error analysis. Moreover, to enhance the diversity of retrieved demonstrations, we incorporate a Mahalanobis distance-based search strategy into the retrieval process. By relying on the OOD proxy, DOPA is capable of identifying informative demonstrations solely within the source domain, without requiring any samples from the target domain. Extensive experiments show that DOPA consistently outperforms baseline approaches across diverse LLMs and natural language understanding tasks. In addition, we provide a multi-dimensional analysis that demonstrates the effectiveness of the proxy in selecting samples that exhibit behavioral similarity to those in the target domain. Our contributions are as follows:

(i) We propose a method that leverages OOD proxies to extract distribution-aligned samples, and we theoretically demonstrate the soundness of the proxy through a bounded proxy error guarantee. (ii) We propose a target-agnostic demonstration retrieval framework based on OOD proxies, which combines proxy results and contextual diversity to enhance the quality of demonstration selection. (iii) Experimental results on multiple NLP tasks across various LLMs demonstrate that DOPA effectively enhances OOD robustness in ICL.

\section{Related Work}

\textbf{Demonstration Retrieval}. Despite the impressive performance demonstrated by ICL, an increasing number of studies have shown its sensitivity to the choice of demonstrations~\cite{Shortcut2024}. To obtain more effective demonstrations, a natural idea is to search over candidate samples within a constrained space~\cite{Man2024Survey}. Depending on whether the retrieval tool has been trained, demonstration search can be divided into off-the-shelf retrieval and retrieval based on fine-tuned models. Term-based similarity has been widely used for demonstration retrieval, with BM25 being one of the most popular scoring metrics~\cite{AgrawalZLZG23, Ye0F0K23}. In addition, several sentence embedding models, such as SBERT~\cite{WangYW24}, RoBERTa~\cite{LiuSZDCC22}, and SimCSE~\cite{GaoYC21}, have also been widely used to compute inter-sample similarity and optimize demonstration selection. Moreover, some approaches assess the influence of individual samples on model predictions to select high-impact examples for demonstrations~\cite{PengDY00OT24, Vinay2024}. Off-the-shelf retrieval methods may yield suboptimal results, as they do not incorporate task-specific information. Therefore, some methods have explored leveraging feedback signals from LLMs to distinguish between important and unimportant samples, and further optimize the retriever for specific tasks using objectives such as ranking~\cite{LiLYLZNXWQ23}, contrastive learning~\cite{ChengHBZLW0WDZ23, Man2023}, and diversity~\cite{Ye0F0K23}. But these methods often rely on feedback from LLMs, which leads to higher computational complexity.

\textbf{OOD Robustness in ICL}. In ICL settings, distribution shifts often cause substantial performance degradation, exposing models’ sensitivity and limited robustness to unseen domains~\cite{YuanCCGZCJL023, 00020Y025}. Such distributional gaps can undermine demonstration retrieval, as retrieved examples may no longer align semantically with the target task. To address this issue, prior work has explored demonstration augmentation, including incorporating external knowledge such as linguistic rules~\cite{JiangC0PL24} or human feedback~\cite{Haoyue2024} for fine-tuning LLMs. However, the necessity of fine-tuning remains debated, with some studies suggesting that LLMs inherently possess the capacity to handle OOD data~\cite{UppaalHL23, Andi2024}. Motivated by this perspective, semantic rewriting has emerged as an alternative, prompting LLMs to adapt source samples to better align with the target domain~\cite{Kyle2024, Madine24}.

\section{Method}
\begin{figure*}[t]
    \centering
    \includegraphics[width=0.95\linewidth]{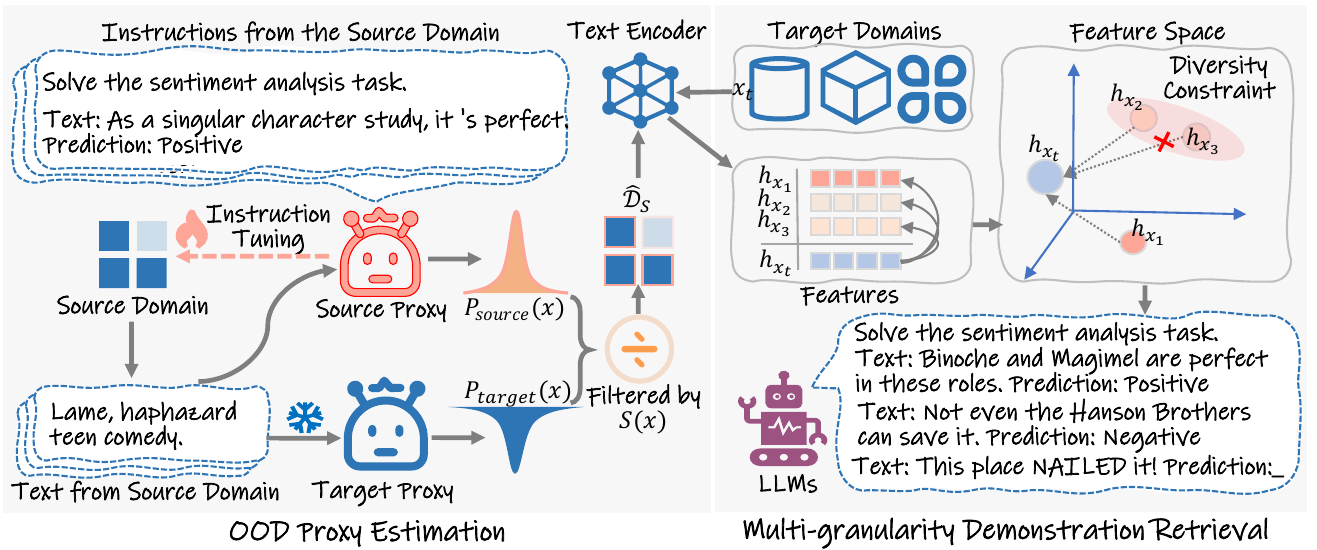}
    \caption{The model architecture of DOPA based on the sentiment analysis task. First, DOPA performs task-specific instruction tuning on the source domain to obtain a source proxy based on any given LLM. Correspondingly, an identical LLM without fine-tuning, which preserves the prior knowledge of the target domain, is employed as the target-domain proxy. For the same input, the ratio between the two proxies is employed as an OOD proxy estimation, which is further combined with similarity and diversity to support multi-granularity demonstration search.}
    \label{fig:model}
\end{figure*}

\subsection{Task Definitions and Model Description}
Our model description begins with some definitions. In the OOD setting, LLMs $\mathcal{M}$ are restricted to using data from $\mathcal{D}_S$ to perform ICL, and are expected to make predictions on any sample $x_t$ from $\mathcal{D}_T$ as accurately as possible. During the inference process of LLMs, all samples from $\mathcal{D}_T$ other than $x_t$ are strictly inaccessible, preventing the model from making decisions by referencing samples from a similar distribution. For ICL, a prompt $\mathcal{P}{x_t}$ is constructed by selecting $N\times |Y|$ labeled examples ${(x^{(j)}, y^{(j)})}_{j=1}^{N\times |Y|}$ from $\mathcal{D}_S$, which are then concatenated with $x_t$ and fed into any $\mathcal{M}$. Here, $|Y|$ denotes the size of the label space. Then, the LLM produces a prediction $\hat{y}_t = \mathcal{M}(\mathcal{P}{x_t})$. In different task settings, $\hat{y}_t$ can take various forms depending on the output space. For classification tasks, it typically corresponds to a token representing a label category (e.g., positive or negative), while for generative tasks, it may be a string representing the desired output. As illustrated in Figure~\ref{fig:model}, DOPA comprises two main components: OOD proxy estimation and multi-granularity demonstration retrieval. The proxy estimation module assesses the proximity of source domain samples to the target domain using an OOD proxy, while the demonstration retrieval module selects appropriate examples by jointly optimizing semantic similarity and diversity constraints. 

\subsection{OOD Proxy Estimation}
The goal of the OOD proxy estimation is to evaluate the utility of source domain samples to select those that are more aligned with the target domain. But without access to the target domain, it is difficult to accurately assess the target distribution. Therefore, inspired by prior work on OOD detection~\cite{RenLFSPDDL19, zhang2022falsehoods}, we construct an OOD proxy to approximate the target domain distribution, and compute the OOD score of any sample via the proxy. The OOD score is then used to guide sample selection from $\mathcal{D}_S$.

\textbf{Proxy Construction}. 
The OOD proxy consists of two components: the source proxy and the target proxy, which ideally model the source and target distributions, respectively. For the former, an intuitive approach is to instruction-tune LLMs on the source domain so that the model can better adapt to the source distribution. In DOPA, the instructions for source proxy are encapsulated in the same format as in ICL, aiming to prompt LLMs to produce reasonable task-related predictions. Formally, for easily accessible source domain data $x_s \in \mathcal{D}_S$, DOPA optimizes the LLM through the following cross-entropy supervised loss:
\begin{equation}
    \mathcal{L}_{sft} = -\sum_{j=1}^T\text{log} p_{\mathcal{M}}(y_{i,j}|\mathcal{P}_{x_s}, y_{i,\textless t}),
\end{equation}
where $\mathcal{P}_{x_s}$ is a task-related prompt that does not contain any demonstrations. 

As for the target domain, since the target domain distribution is unknown, some methods propose a general approach by replacing the target-domain proxy with a uniform distribution~\cite{bishop1993novelty, NalisnickMTGL19}. Such a strong assumption is inherently destined to yield suboptimal results as shown in Lemma~\ref{lem:1}. Given that LLMs are pretrained on extensive corpora, it is reasonable to assume that they implicitly encode a broad spectrum of linguistic and factual knowledge. As such, LLMs can act as weak proxies for the target distribution, particularly in few-shot settings~\cite{Andi2024}. 

\textbf{Sample Screening based on OOD Score}.
Given the aforementioned proxies, DOPA further assumes that if a sample exhibits divergent behavior under these two proxies, it may suggest an inherent bias or a stronger alignment toward one specific domain. This facilitates domain discrimination in the absence of any auxiliary target domain samples. In the previous research, the likelihood ratio is one of the most commonly used detection criteria for the divergent behavior~\cite{RenLFSPDDL19, zhang2022falsehoods}:
\begin{equation}
    S(x) = \frac{P_{target}(x)}{P_{source}(x)} \approx \underbrace{\frac{P_{\text{target}}^{\text{proxy}}(x)}{P_{\text{source}}^{\text{proxy}}(x)}}_{\text{OOD proxy}},
\end{equation}
where $P_{source}(x)$ and $P_{target}(x)$ represent the behavior of models with the source and target domain distributions when given the same input sample $x$, respectively. Under distributional uncertainty, the OOD proxy is used as their approximation. To support the validity of OOD proxy estimation, we further establish a theoretical guarantee on the boundedness of proxy error under mild assumptions.

\begin{theorem}[\underline{Proxy Error Bound}]\label{th:1}
    Let $P_{\mathrm{target}}$ and $P_{\mathrm{source}}$ be the true probability distributions of the target and source domain, let $P_{\mathrm{target}}^{\mathrm{proxy}}$ and $P_{\mathrm{source}}^{\mathrm{proxy}}$ be the corresponding proxy distributions. Suppose there exist constants $\varepsilon_t \geq 0$, $\varepsilon_s \geq 0$, $m_t > 0$, and $m_s > 0$ such that the following hold:
    \begin{itemize}
    \setlength{\leftmargin}{0pt}
        \item The Kullback-Leibler divergences are bounded: $
        D_{\mathrm{KL}}(P_{\mathrm{target}} \parallel P_{\mathrm{target}}^{\mathrm{proxy}}) \leq \varepsilon_t, 
        D_{\mathrm{KL}}(P_{\mathrm{source}} \parallel P_{\mathrm{source}}^{\mathrm{proxy}}) \leq \varepsilon_s$. The proxy distributions have pointwise lower bounds: $
        P_{\mathrm{target}}(x), P_{\mathrm{target}}^{\mathrm{proxy}}(x) \geq m_t, \quad
        P_{\mathrm{source}}(x), P_{\mathrm{source}}^{\mathrm{proxy}}(x) \geq m_s. $
    \end{itemize}
    Then, for all $x$, the error in the log-likelihood ratio satisfies:
    \[
    \left| \log \frac{P_{\mathrm{target}}(x)}{P_{\mathrm{source}}(x)} - \log \frac{P_{\mathrm{target}}^{\mathrm{proxy}}(x)}{P_{\mathrm{source}}^{\mathrm{proxy}}(x)} \right| \leq \frac{\varepsilon_t}{m_t} + \frac{\varepsilon_s}{m_s}.
    \]
\end{theorem}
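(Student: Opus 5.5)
The plan is to split the log-likelihood-ratio error into a target term and a source term, bound each by a pointwise log-difference estimate, and then convert each log-difference into the KL budget using the lower bounds $m_t,m_s$. First I will write
\[
\log \frac{P_{\mathrm{target}}(x)}{P_{\mathrm{source}}(x)} - \log \frac{P_{\mathrm{target}}^{\mathrm{proxy}}(x)}{P_{\mathrm{source}}^{\mathrm{proxy}}(x)}
= \Bigl(\log P_{\mathrm{target}}(x)-\log P_{\mathrm{target}}^{\mathrm{proxy}}(x)\Bigr) - \Bigl(\log P_{\mathrm{source}}(x)-\log P_{\mathrm{source}}^{\mathrm{proxy}}(x)\Bigr).
\]
By the triangle inequality it then suffices to show $|\log P_{\mathrm{target}}(x)-\log P_{\mathrm{target}}^{\mathrm{proxy}}(x)|\le \varepsilon_t/m_t$, together with the symmetric statement for the source pair. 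The two pairs are handled identically, so below I write $P,Q,\varepsilon,m$ for either pair.

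\textbf{Step 1 (linearize the log).} Both $P(x)\ge m$ and $Q(x)\ge m$ hold by hypothesis. The mean value theorem for $\log$ on the interval between $P(x)$ and $Q(x)$ therefore gives $|\log P(x)-\log Q(x)| = |P(x)-Q(x)|/\xi$ for some $\xi\ge m$. Hence $|\log P(x)-\log Q(x)|\le |P(x)-Q(x)|/m$, and the whole bound reduces to the pointwise claim $|P(x)-Q(x)|\le\varepsilon$.

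\textbf{Step 2 (from KL to a pointwise gap).} To bring the KL hypothesis to a single point $x$, I will use the data-processing inequality for the coarsening $\{x\}$ versus $\{x\}^c$. This gives $d(P(x)\,\|\,Q(x))\le D_{\mathrm{KL}}(P\|Q)\le\varepsilon$, where $d$ is the binary KL divergence. The remaining task is to show that $d(p\|q)\le\varepsilon$ forces $|p-q|\le\varepsilon$ when $p,q\ge m$. I will try to obtain this by combining a lower bound on $d(p\|q)$ with the constraint $p,q\in[m,1-m]$.

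\textbf{Main obstacle.} Step 2 is the hard step, and I expect it to be the place where the stated form can fail. Near $p=q$ the binary KL is quadratic, $d(p\|q)\approx (p-q)^2/(2q(1-q))$, so a KL budget $\varepsilon$ only yields $|p-q|=O(\sqrt{\varepsilon})$, which is linear in $\sqrt{\varepsilon}$ rather than in $\varepsilon$. Concretely, for a two-point space with $q=1/2$ and $p=1/2+\delta$, we have $\varepsilon\approx 2\delta^2$. The left-hand side of the theorem is then about $2\delta$, while the right-hand side is $\varepsilon/m\approx 4\delta^2$. So for small $\delta$ the inequality as literally stated fails.

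My first attempt will be to run the argument in the regime where it does hold. There $(p-q)^2\le 2\varepsilon\max(p,q)$, which gives $|p-q|\le\varepsilon$ as soon as $\varepsilon\ge 2$; that is the regime in which the claimed linear dependence is attainable. I will also check whether the authors intend the hypothesis to be a pointwise bound $|P(x)-Q(x)|\le\varepsilon$, which is exactly what Step 1 consumes. Under either reading, Steps 1 and 2 combined with the triangle inequality give $\varepsilon_t/m_t+\varepsilon_s/m_s$. I will state explicitly that for general small $\varepsilon$ the KL hypothesis alone supports only the square-root version of the bound.
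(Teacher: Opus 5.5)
Your diagnosis is correct, and it is more accurate than the paper's own argument. The statement as written is false, so neither your route nor any other can prove it.

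\textbf{Where the paper's proof fails.} The paper starts with the same triangle-inequality split you use. It then bounds each term by asserting that if $\bigl|\log\frac{P_t(x)}{P_t^p(x)}\bigr|>\varepsilon_t/m_t$, then $P_t(x)\bigl|\log\frac{P_t(x)}{P_t^p(x)}\bigr|>\varepsilon_t$, which supposedly contradicts $D_{\mathrm{KL}}(P_t\|P_t^p)\le\varepsilon_t$. Here $P_t,P_t^p$ abbreviate $P_{\mathrm{target}},P_{\mathrm{target}}^{\mathrm{proxy}}$. That inference is invalid. The summands $P_t(x)\log\frac{P_t(x)}{P_t^p(x)}$ of the KL divergence are signed, negative wherever $P_t(x)<P_t^p(x)$. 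So a single summand, let alone its absolute value, is not bounded by the total.

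\textbf{Make the counterexample explicit.} Your two-point example is exactly such a case; write it out with numbers so the refutation is airtight. Take $P_{\mathrm{source}}=P_{\mathrm{source}}^{\mathrm{proxy}}=P_{\mathrm{target}}^{\mathrm{proxy}}=(1/2,1/2)$ and $P_{\mathrm{target}}=(0.6,0.4)$. Then $\varepsilon_s=0$, $m_s=1/2$, $m_t=0.4$, and $\varepsilon_t=0.6\log1.2+0.4\log0.8\approx0.020$. The right-hand side is about $0.050$, while at the first point the left-hand side is $\log1.2\approx0.182$. The summand $0.6\log1.2\approx0.109$ exceeds $\varepsilon_t$ only because the second summand $0.4\log0.8\approx-0.089$ cancels most of it. That cancellation is precisely what the paper's contradiction overlooks.

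\textbf{Drop the salvage attempts.} The ``regime where it does hold'' is vacuous. For $\varepsilon\ge1$ one has $|p-q|\le1\le\varepsilon$ trivially. In fact, once $\varepsilon_t\ge m_t\log(1/m_t)$ the target term is bounded automatically, because both probabilities lie in $[m_t,1]$. The pointwise hypothesis $|P(x)-Q(x)|\le\varepsilon$ is a different theorem from the KL-based one the paper states and tries to prove. So do not claim that ``under either reading'' your steps deliver $\varepsilon_t/m_t+\varepsilon_s/m_s$.

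\textbf{Finish Step 2 with Pinsker's inequality.} Use $|P(x)-Q(x)|\le\mathrm{TV}(P,Q)\le\sqrt{D_{\mathrm{KL}}(P\|Q)/2}$, with natural logarithms throughout. Your Step 1 is the ingredient the paper lacks: it passes from a probability gap to a log-ratio gap via the $1/m$-Lipschitz property of $\log$ on $[m,1]$. Pinsker, Step 1 and the triangle inequality together prove the corrected statement
\[
\left|\log\frac{P_{\mathrm{target}}(x)}{P_{\mathrm{source}}(x)}-\log\frac{P_{\mathrm{target}}^{\mathrm{proxy}}(x)}{P_{\mathrm{source}}^{\mathrm{proxy}}(x)}\right|\le\frac{1}{m_t}\sqrt{\frac{\varepsilon_t}{2}}+\frac{1}{m_s}\sqrt{\frac{\varepsilon_s}{2}}.
\]
Your example shows the square-root rate cannot be improved: there the left-hand side and this corrected bound both equal $2\delta$ to leading order. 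Present this bound together with the counterexample, rather than the claimed linear one.
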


\begin{lemma}[\underline{Error Bound with Uniform Proxy}] \label{lem:1}
    Building upon Theorem~\ref{th:1}, if a uniform distribution is used as the proxy for target, it is more likely to result in a looser upper bound on the error.
\end{lemma}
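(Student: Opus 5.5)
The plan is to specialize the bound of Theorem~\ref{th:1} to the case $P_{\mathrm{target}}^{\mathrm{proxy}} = U$, the uniform distribution on a finite support $\mathcal{X}$ with $|\mathcal{X}| = K$. Only the target term $\varepsilon_t/m_t$ changes, since the source proxy is untouched. So it suffices to show that the admissible constants $(\varepsilon_t, m_t)$ under the uniform proxy make $\varepsilon_t/m_t$ larger than for a pretrained-LLM target proxy in typical situations. The lemma says ``more likely,'' so I will read it as a comparison of the bound's constants, not a strict inequality for every distribution.

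The first step is to compute the smallest admissible $\varepsilon_t$ exactly:
\[
D_{\mathrm{KL}}(P_{\mathrm{target}} \parallel U) = \log K - H(P_{\mathrm{target}}).
\]
This is large whenever the target distribution is concentrated, which is the natural-language regime of low entropy relative to the vocabulary or sequence space. It vanishes only when $P_{\mathrm{target}}$ is itself uniform.

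The second step is the lower bound. Since $U(x) = 1/K$, the assumption forces $m_t \le \min\{1/K, \min_x P_{\mathrm{target}}(x)\}$, so $m_t \le 1/K$. Combining the two steps, the target term under the uniform proxy satisfies
\[
\frac{\varepsilon_t}{m_t} \ge K\bigl(\log K - H(P_{\mathrm{target}})\bigr).
\]
This grows roughly like $K\log K$ for concentrated targets.

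The third step is the comparison. For an LLM proxy $Q$ that is close to $P_{\mathrm{target}}$, we have $D_{\mathrm{KL}}(P_{\mathrm{target}}\parallel Q) = \varepsilon_Q$, which is small by the paper's assumption that pretrained LLMs encode broad linguistic knowledge. The lower bound $m_t$ is then governed mainly by $\min_x P_{\mathrm{target}}(x)$ rather than by $1/K$. Writing $H(P_{\mathrm{target}}) \le \log K - \delta$, a sufficient condition for the uniform bound to be looser is $\varepsilon_Q/m_Q < K\delta$.

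The main obstacle is that this comparison cannot hold unconditionally. If $P_{\mathrm{target}}$ is near uniform, or if the LLM proxy is poorly calibrated with tiny probabilities on the support, the uniform proxy could give the tighter bound. This is why the lemma is phrased as ``more likely.'' To handle it, I will state the sufficient condition explicitly and argue that it holds in the low-entropy, large-$K$ regime. I would also note that $m_t \le 1/K$ with $K$ huge makes the uniform bound essentially vacuous, whereas the LLM proxy's bound is limited only by the target's own minimum mass.
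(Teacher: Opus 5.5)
Your core argument is the paper's. You set the target proxy to $U$ on $\mathcal{X}$, identify the target constant as $D_{\mathrm{KL}}(P_{\mathrm{target}}\parallel U)=\log K-H(P_{\mathrm{target}})$, and observe that it is large for a peaked target, so the target term in Theorem~\ref{th:1} exceeds that of a proxy with small KL. Your explicit acknowledgment that this can fail (near-uniform target, badly calibrated LLM proxy) is a more careful reading of ``more likely'' than the paper's ``typically looser''.

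The step that does not hold up is the $m_t$ argument. The cap $m_t\le 1/K$ is not caused by the uniform proxy. The hypothesis $P_{\mathrm{target}}(x)\ge m_t$ on all $K$ points already forces $Km_t\le 1$ for \emph{every} proxy. Since $\min_x P_{\mathrm{target}}(x)\le 1/K$, your $\min\{1/K,\min_x P_{\mathrm{target}}(x)\}$ is simply $\min_x P_{\mathrm{target}}(x)$. So the uniform proxy admits the largest possible $m_t$, while an LLM proxy $Q$ can only lower it through $\min_x Q(x)$.

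Hence the contrast ``uniform bound essentially vacuous, LLM bound limited only by the target's own minimum mass'' is not real. The factor $K$ in your lower bound appears equally on the LLM side ($\varepsilon_Q/m_Q\ge K\varepsilon_Q$). Your sufficient condition $\varepsilon_Q/m_Q<K\delta$ is valid, but it is more stringent than necessary by the factor $K\min_x P_{\mathrm{target}}(x)$. By concavity of entropy, $K\min_x P_{\mathrm{target}}(x)\le H(P_{\mathrm{target}})/\log K$, so this factor is tiny exactly in the low-entropy, large-$K$ regime you invoke. There your condition gets harder to meet, not easier.

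The fix is to drop the $m_t$ step and do what the paper does implicitly: use the same $m_t=\min_x P_{\mathrm{target}}(x)$ for both proxies. This is admissible whenever $\min_x Q(x)\ge\min_x P_{\mathrm{target}}(x)$. The comparison then reduces to $\varepsilon_Q<\log K-H(P_{\mathrm{target}})$, which genuinely becomes easier as the target gets more peaked.

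One caveat you share with the paper: both arguments compare the expression $\varepsilon_t/m_t$ from Theorem~\ref{th:1}, whose proof is not valid as written. A single term of the KL sum exceeding $\varepsilon_t$ in absolute value does not contradict $D_{\mathrm{KL}}\le\varepsilon_t$ when other terms are negative. For example, $P_{\mathrm{target}}=(\tfrac12,\tfrac12)$ with proxy $(\tfrac12+\eta,\tfrac12-\eta)$ has KL of order $\eta^2$ but a pointwise log-ratio of order $\eta$. A correct bound such as $\sqrt{\varepsilon_t/2}/m_t$ (Pinsker plus the mean value theorem) is still increasing in $\varepsilon_t$, so the fixed-$m_t$ comparison above goes through unchanged.
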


Due to space limitations, the proof of the above theorem is provided in Appendix~\ref{app:th}. Such a uniform bound certifies that the proxy‐based score deviates from the true likelihood ratio by at most a known quantity, thereby providing theoretical assurance for reliable estimation. In the case of autoregressive LLMs, perplexity~\cite{wuhrmann2025low} is commonly employed to quantify the model’s familiarity with a given text $x$: $PPL(x)=exp \big(-\frac{1}{m} \sum_{i=1}^m logP(w_i|w_{<i}) \big)$, where $m$ is the total number of tokens in $x$, and $P(w_i|w_{<i})$ is the conditional probability of the language model predicting the $i$-th token. Therefore, to conform to the $log$ form as stated in Theorem~\ref{th:1}, we adopt the log-perplexity difference as a more stable alternative:
\begin{equation}
    S(x) = logPPL_{target}^{proxy}(x) - logPPL_{source}^{proxy}(x). 
\end{equation}
Ideally, if the value of $S(x)$ is relatively low, it exhibits higher perplexity under the source-domain proxy and lower perplexity under the target-domain proxy, which further indicates that the sample is more aligned with the target domain and should therefore be prioritized for constructing demonstrations. By performing a single pass over $\mathcal{D}_S$, we can obtain a potential subset $\hat{\mathcal{D}}_S$ that is closer to the target domain distribution by selecting the $k$ samples with the lowest OOD scores. 

\subsection{Demonstration Retrieval}
Although the OOD scores help identify source domain samples that are more likely to align with the target domain, the resulting coarse-grained subset still requires further refinement to construct effective demonstrations. Existing studies have provided strong support for the demonstration search process~\cite{LiuSZDCC22, AgrawalZLZG23}, a general approach is to adopt an off-the-shelf text representation model to encode candidate texts into vectors and rank the most relevant demonstrations based on their cosine similarity with the test sample. But one limitation of proxy-based OOD scoring lies in its reliance on language model perplexity, which primarily captures token-level fluency and distributional similarity. As a result, it may implicitly favor shorter texts or those conforming to high-frequency linguistic patterns~\cite{HoltzmanBDFC20}, leading to reduced diversity in the selected sample pool and potentially impairing the quality of the retrieved demonstrations. To address this issue, we further introduce a global diversity constraint to improve the overall quality of the retrieved demonstrations. Specifically, for each sample representation $h_{x_i}$ corresponding to the proxy-filtered set $\hat{\mathcal{D}}_S$, we initialize a candidate sample set $\mathcal{D}_{demo}$ based on the similarity of the representations to $h_t$:
\begin{equation}
    \label{eq:init}
    \mathcal{D}_{demo} = arg \max_{C} \{sim(h_{x_i}, h_t)\}_{i=1}^{|\hat{\mathcal{D}}_S|},
\end{equation}
where $C$ is the number of samples in the initialized candidates\footnote{We specify the value of $C$ in the detailed experimental settings.}. Subsequently, the mean pairwise Mahalanobis distance~\cite{LiDWW23} among samples in $\mathcal{D}_{demo}$ is used to quantify the diversity:
\begin{equation}
    Div = \frac{2}{|\mathcal{D}_{demo}|(|\mathcal{D}_{demo}|-1)}\sum_{i<j}\sqrt{\mathbf{D}^{\top}_{ij} \Sigma^{-1} \mathbf{D}_{ij}},
\end{equation}
where $\mathbf{D}_{ij}=h_{x_i} - h_{x_j}$, $\Sigma$ is the empirical covariance matrix computed over all samples. The Mahalanobis distance is adopted because it accounts for the correlations between samples while measuring diversity, which helps impose constraints on similarity-based retrieval results. If a new sample $\hat{x} \in \{\hat{\mathcal{D}}_S-\mathcal{D}_{demo}$\} does not lead to a decrease in overall diversity i.e. $Div_{\mathcal{D}_{demo}} \leq Div_{\{\hat{x}\} \cup \mathcal{D}_{demo}}$, it is retained. This process continues until the number of samples meets the required threshold for constructing demonstrations. The final selected demonstration set is used for ICL. The above procedure is summarized in Algorithm~\ref{alg:algorithm}. After obtaining sufficient samples, we construct demonstrations in a fixed label order to prevent bias introduced by orders, and use them for ICL.

\begin{algorithm}[tb]
\label{algo}
\caption{Demonstration Retrieval Process of DOPA}
\label{alg:algorithm}
\textbf{Input}: Proxy-filtered set $\hat{\mathcal{D}_S}$, test sample $x_t$.\\
\textbf{Parameter}: Demonstration quantity $N\times |Y|$, initialized candidate set $C$. \\
\textbf{Output}: Final demonstration set $\mathcal{D}_{demo}$ with size $N$. 
    \begin{algorithmic}[1] 
        \STATE Init $\mathcal{D}_{demo}$ by Eq.\ref{eq:init}, sort $\hat{\mathcal{D}_S}$ in ascending order according to $sim(h_{x_i}, h_t)$, counter$\gets 0$. 
        \WHILE{$|\mathcal{D}_{demo}| < N\times |Y|$}
            \STATE $\hat{x} \gets \hat{\mathcal{D}}_S[C+counter]$.
            \IF {$Div_{\mathcal{D}_{demo}} \leq Div_{\{\hat{x}\} \cup \mathcal{D}_{demo}}$}
            \STATE $\mathcal{D}_{demo}\gets\{\hat{x}\} \cup \mathcal{D}_{demo}$.
            \ENDIF
            \STATE counter$\gets$counter + 1. 
        \ENDWHILE
        \STATE \textbf{return} $\mathcal{D}_{demo}$
\end{algorithmic}
\end{algorithm}

\definecolor{lightgray}{gray}{0.9}
\definecolor{rowgray}{gray}{0.88}
\definecolor{colblue}{rgb}{0.85,0.92,1}
\begin{table*}[h]
  \setlength\tabcolsep{4.8pt}
  \renewcommand{\arraystretch}{0.3}
  \centering
  \footnotesize
    \begin{tabular}{c!{\color{white}\vrule width 1pt}
    c!{\color{white}\vrule width 1pt}c!{\color{white}\vrule width 1pt}c!{\color{white}\vrule width 1pt}>{\columncolor{colblue}}c!{\color{white}\vrule width 1pt}
    c!{\color{white}\vrule width 1pt}c!{\color{white}\vrule width 1pt}c!{\color{white}\vrule width 1pt}>{\columncolor{colblue}}c!{\color{white}\vrule width 1pt}
    c!{\color{white}\vrule width 1pt}c!{\color{white}\vrule width 1pt}c!{\color{white}\vrule width 1pt}>{\columncolor{colblue}}cc}
    \toprule
     \multirow{2}{*}{\textbf{Methods}} 
      & \multicolumn{4}{c}{\textbf{SA}} 
      & \multicolumn{4}{c}{\textbf{TD}} 
      & \multicolumn{4}{c}{\textbf{NLI}} \\ 
      \cmidrule(lr){2-5} \cmidrule(lr){6-9} \cmidrule(lr){10-13}
     & dynasent & semeval & sst & \textbf{avg} 
                  & implicit & adv & toxigen & \textbf{avg} 
                  & wanli & anli & cnli & \textbf{avg} \\ 
    \midrule
    \rowcolor{black!8} \multicolumn{13}{c}{\textbf{\texttt{GPT2-xl}}} \\
          Random & 36.33 & 49.28 & 47.70 & 44.44 & 50.47 & 50.20 & 50.60 & 50.42 & 34.23 & 32.23 & 39.22 & 35.23 \\
          KNN   & 35.89 & 45.92 & 51.17 & 44.33 & 47.67 & 47.50 & 48.50 & 47.89 & 33.57 & 33.50 & 45.05 & 37.37 \\
          DrICL & 37.00 & 47.66 & 52.76 & 45.81 & 49.70 & 51.38 & 48.23 & 49.77 & 32.03 & 33.33 & 47.20 & 37.52 \\
          Rewrite & 36.00 & 45.84 & 50.61 & 44.15 & 46.90 & 45.72 & 49.37 & 47.33 & 34.00 & 32.77 & 45.38 & 37.38 \\
          InfICL & 36.61 & 49.26 & 46.95 & 44.27 & 49.90 & 50.33 & 50.17 & 50.13 & 33.80 & 32.40 & 24.25 & 30.15 \\
          DICL & 38.46 & 49.86 & 56.92 & 48.41 & 46.20 & 47.37 & 46.37 & 46.65 & 32.93 & 33.47 & 44.57 & 36.99 \\
          \cellcolor{rowgray}DOPA*   & \cellcolor{rowgray}38.23 & \cellcolor{rowgray}48.44 & \cellcolor{rowgray}59.61 & \textbf{48.76} & \cellcolor{rowgray}51.67 & \cellcolor{rowgray}53.29 & \cellcolor{rowgray}50.50 & \textbf{51.82} & \cellcolor{rowgray}34.93 & \cellcolor{rowgray}33.43 & \cellcolor{rowgray}45.77 & \textbf{38.04} \\
    \midrule
    \rowcolor{black!8} \multicolumn{13}{c}{\textbf{\texttt{LLaMA3.2-3B}}} \\
          Random & 53.81 & 47.86 & 66.26 & 55.98 & 57.70 & 55.20 & 65.70 & 59.53 & 37.70 & 35.50 & 42.75 & 38.65 \\
          KNN   & 52.63 & 45.76 & 65.42 & 54.60 & 56.63 & 53.29 & 51.03 & 53.65 & 37.20 & 34.67 & 44.24 & 38.70 \\
          DrICL & 56.05 & 46.08 & 67.10 & 56.41 & 57.83 & 56.18 & 64.80 & 59.61 & 36.50 & 33.87 & 42.95 & 37.77 \\
          Rewrite & 53.92 & 45.06 & 64.29 & 54.43 & 51.57 & 57.04 & 62.70 & 57.10 & 36.43 & 35.60 & 42.75 & 38.26 \\
          InfICL & 53.35 & 46.80 & 64.39 & 54.84 & 56.33 & 55.20 & 65.57 & 59.03 & 36.23 & 36.00 & 40.65 & 37.63 \\
          DICL & 53.79 & 47.50 & 68.42 & 56.57 & 56.60 & 53.88 & 66.00 & 58.83 & 37.67 & 34.60 & 42.52 & 38.26 \\
          \cellcolor{rowgray}DOPA*   & \cellcolor{rowgray}55.71 & \cellcolor{rowgray}53.28 & \cellcolor{rowgray}68.88 & \textbf{59.29} & \cellcolor{rowgray}57.87 & \cellcolor{rowgray}56.45 & \cellcolor{rowgray}65.30 & \textbf{59.87} & \cellcolor{rowgray}38.40 & \cellcolor{rowgray}35.87 & \cellcolor{rowgray}43.19 & \textbf{39.15} \\
    \midrule
    \rowcolor{black!8} \multicolumn{13}{c}{\textbf{\texttt{Gemma2-2B}}} \\
          Random & 56.47 & 47.06 & 66.45 & 56.66 & 55.57 & 56.51 & 63.93 & 58.67 & 33.37 & 33.00 & 42.66 & 36.34 \\
          KNN   & 55.29 & 47.28 & 66.26 & 56.28 & 53.20 & 47.89 & 63.87 & 54.99 & 33.50 & 32.93 & 41.61 & 36.01 \\
          DrICL & 57.67 & 47.20 & 67.10 & 57.32 & 55.43 & 56.45 & 61.17 & 57.68 & 33.73 & 33.57 & 45.43 & \textbf{37.58} \\
          Rewrite & 57.91 & 47.12 & 67.01 & 57.35 & 48.57 & 51.84 & 61.84 & 54.08 & 33.70 & 33.33 & 45.29 & 37.44 \\
          InfICL & 58.07 & 45.38 & 64.57 & 56.01 & 55.63 & 57.50 & 59.90 & 57.68 & 33.27 & 32.93 & 45.29 & 37.16 \\
          DICL & 55.61 & 48.66 & 68.10 & 57.46 & 54.40 & 53.68 & 65.30 & 57.79 & 33.43 & 33.33 & 45.15 & 37.30 \\
          \cellcolor{rowgray}DOPA*   & \cellcolor{rowgray}57.24 & \cellcolor{rowgray}47.70 & \cellcolor{rowgray}68.13 & \textbf{57.69} & \cellcolor{rowgray}56.53 & \cellcolor{rowgray}58.09 & \cellcolor{rowgray}65.73 & \textbf{60.12} & \cellcolor{rowgray}33.37 & \cellcolor{rowgray}33.07 & \cellcolor{rowgray}46.10 & 37.51 \\
    \midrule
        \rowcolor{black!8} \multicolumn{13}{c}{\textbf{\texttt{Qwen3-1.7B}}} \\
          Random & 62.82 & 60.90 & 69.17 & 64.29 & 54.97 & 52.89 & 67.20 & 58.35 & 41.30 & 35.17 & 39.12 & 38.53 \\
          KNN & 60.75 & 58.32 & 70.67 & 63.25 & 56.10 & 50.13 & 65.73 & 57.32 & 41.77 & 35.20 & 37.83 & 38.27 \\
          DrICL & 61.54 & 60.16 & 70.38 & 64.03 & 54.07 & 55.86 & 66.37 & 58.76 & 42.77 & 35.33 & 37.49 & 38.53 \\
          Rewrite    & 60.38 & 54.72 & 70.29 & 61.80 & 51.33 & 57.50 & 61.97 & 56.93 & 39.47 & 36.63 & 38.79 & 38.30 \\
          InfICL    & 62.10 & 61.12 & 69.92 & 64.38 & 55.30 & 56.83 & 65.23 & 59.12 & 40.80 & 35.57 & 40.03 & 38.80 \\
          DICL & 59.94 & 57.00 & 69.82 & 62.26 & 55.37 & 54.54 & 64.45 & 58.12 & 42.70 & 35.83 & 39.22 & 39.25 \\
          \cellcolor{rowgray}DOPA*   & \cellcolor{rowgray}63.35 & \cellcolor{rowgray}59.64 & \cellcolor{rowgray}71.79 & \textbf{64.93} & \cellcolor{rowgray}55.47 & \cellcolor{rowgray}56.45 & \cellcolor{rowgray}65.73 & \textbf{59.22} & \cellcolor{rowgray}42.37 & \cellcolor{rowgray}36.47 & \cellcolor{rowgray}40.94 & \textbf{39.93} \\
    \midrule
    \rowcolor{black!8} \multicolumn{13}{c}{\textbf{\texttt{LLaMA3.1-8B}}} \\
          Random & 58.46 & 52.02 & 69.63 & 60.04 & 56.50 & 55.66 & 66.40 & 59.52 & 39.73 & 36.67 & 43.09 & 39.83\\
          KNN & 57.61 & 49.32 & 70.48 & 59.13 & 57.23 & 54.80 & 65.50 & 59.18 & 41.00 & 37.40 & 42.23 & 40.21\\
          DrICL & 59.20 & 51.22 & 69.73 & 60.05 & 58.57 & 56.64 & 67.73 & 60.98 & 40.70 & 35.90 & 42.32 & 39.64 \\
          Rewrite & 56.59 & 48.74 & 69.07 & 58.13 & 54.57 & 59.67 & 63.87 & 59.37 & 38.87 & 36.00 & 43.23 & 39.37 \\
          InfICL & 57.63 & 52.92 & 70.76 & 60.44 & 59.00 & 59.74 & 63.77 & 60.83 & 40.87 & 37.23 & 43.04 & 40.38 \\
          DICL & 58.35 & 51.44 & 70.10 & 59.96 & 57.03 & 55.99 & 65.97 & 59.66 & 38.97 & 36.37 & 42.13 & 39.16 \\
          \cellcolor{rowgray}DOPA*   & \cellcolor{rowgray}59.25 & \cellcolor{rowgray}51.84 & \cellcolor{rowgray}72.16 & \textbf{61.08} & \cellcolor{rowgray}58.60 & \cellcolor{rowgray}59.28 & \cellcolor{rowgray}67.43 & \textbf{61.77} & \cellcolor{rowgray}41.33 & \cellcolor{rowgray}37.53 & \cellcolor{rowgray}42.75 & \textbf{40.54} \\
    \midrule
    \rowcolor{black!8} \multicolumn{13}{c}{\textbf{\texttt{Qwen3-8B}}} \\
          Random & 68.56 & 63.42 & 76.85 & 69.61 & 56.90 & 56.71 & 65.63 & 59.75 & 41.13 & 35.77 & 28.17 & 35.02\\
          KNN & 65.34 & 63.78 & 76.01 & 68.38 & 57.17 & 58.03 & 68.07 & 61.09 & 41.93 & 34.93 & 30.27 & 35.71\\
          DrICL & 68.40 & 63.82 & 78.26 & 70.16 & 63.10 & 57.04 & 77.43 & 65.86 & 40.57 & 36.10 & 32.81 & 36.49\\
          Rewrite & 64.25 & 60.08 & 74.98 & 66.44 & 55.63 & 59.41 & 70.50 & 61.85 & 38.83 & 34.63 & 31.09 & 34.85\\
          InfICL & 70.71 & 64.04 & 75.35 & 70.03 & 56.33 & 57.96 & 65.97 & 60.09 & 40.07 & 35.60 & 35.29 & 36.99\\
          DICL & 64.32 & 63.46 & 76.10 & 67.96 & 60.80 & 59.21 & 77.53 & 65.85 & 41.60 & 35.33 & 29.60 & 35.51\\
          \cellcolor{rowgray}DOPA*   & \cellcolor{rowgray}70.90 & \cellcolor{rowgray}63.32 & \cellcolor{rowgray}78.35 & \textbf{70.86} & \cellcolor{rowgray}62.87 & \cellcolor{rowgray}59.08 & \cellcolor{rowgray}79.00 & \textbf{66.98} & \cellcolor{rowgray}45.40 & \cellcolor{rowgray}38.10 & \cellcolor{rowgray}34.29 & \textbf{39.26} \\
    \bottomrule
  \end{tabular}
  \caption{The performance (accuracy \%) on classification tasks, * indicates that the results based on the LLM among all the datasets are statistically significant under the Wilcoxon Signed-Rank Test ($p\leq 0.05$). The calculation process of significance is presented in Appendix~\ref{app:setting}.}
  \label{tab:gpt_cls}
\end{table*}

\section{Experiment}

\subsection{Experimental Setup}
We conduct experiments on the OOD-specific benchmark BOSS~\cite{YuanCCGZCJL023}, which includes three classification tasks, Sentiment Analysis (SA), Toxicity Detection (TD), and Natural Language Inference (NLI) as well as one generation task, Named Entity Recognition (NER). All instruction templates follow the format provided in the original BOSS paper. In addition, we compare our proposed method DOPA with various baseline approaches to comprehensively demonstrate its advantages. These baselines include:
\textbf{Random}~\cite{PengDY00OT24}, \textbf{KNN}~\cite{LiuSZDCC22}, \textbf{DrICL}~\cite{Man2023}, \textbf{Rewrite}~\cite{Madine24}, \textbf{InfICL}~\cite{Vinay2024}, and \textbf{DICL}~\cite{KapuriyaKGB25}, where Rewrite refers to the data augmentation-based method, while the others are demonstration retrieval-based methods. All the methods are implemented in different LLMs to verify the adaptability of the proposed methods, including \texttt{GPT2-xl}, \texttt{Qwen3-1.7B}, \texttt{Qwen3-8B}, \texttt{Gemma2-2B}, and \texttt{LLaMA3.2-3B}, \texttt{LLaMA3.1-8B}. In addition, to investigate the performance of the proposed method on closed-source models, we also conduct experiments on \texttt{GPT4o-mini} and \texttt{GPT3.5-turbo}, and compare them with KNN, InfICL, Rewrite and DICL. Additional experimental settings can be found in Appendix~\ref{app:details}.

\begin{table*}[h]
  \setlength\tabcolsep{4.8pt}
  \renewcommand{\arraystretch}{0.3}
  \centering
  \footnotesize
    \begin{tabular}{c!{\color{white}\vrule width 1pt}
    c!{\color{white}\vrule width 1pt}c!{\color{white}\vrule width 1pt}c!{\color{white}\vrule width 1pt}>{\columncolor{colblue}}c!{\color{white}\vrule width 1pt}
    c!{\color{white}\vrule width 1pt}c!{\color{white}\vrule width 1pt}c!{\color{white}\vrule width 1pt}>{\columncolor{colblue}}c!{\color{white}\vrule width 1pt}
    c!{\color{white}\vrule width 1pt}c!{\color{white}\vrule width 1pt}c!{\color{white}\vrule width 1pt}>{\columncolor{colblue}}cc}
    \toprule
     \multirow{2}{*}{\textbf{Methods}} 
      & \multicolumn{4}{c}{\textbf{SA}} 
      & \multicolumn{4}{c}{\textbf{TD}} 
      & \multicolumn{4}{c}{\textbf{NLI}} \\ 
      \cmidrule(lr){2-5} \cmidrule(lr){6-9} \cmidrule(lr){10-13}
     & dynasent & semeval & sst & \textbf{avg} 
                  & implicit & adv & toxigen & \textbf{avg} 
                  & wanli & anli & cnli & \textbf{avg} \\   \midrule
    \rowcolor{black!8} \multicolumn{13}{c}{\textbf{\texttt{GPT4o-mini}}} \\
          KNN    & 67.67 & 60.50 & 78.17 & 68.78  & 57.63 & 57.13 & 82.00 & 65.58  & 37.67 & 39.67 & 26.83 & 34.72 \\
          Rewrite & 64.33 & 56.17 & 75.83 & 65.44 & 54.63 & 54.75 & 72.25 & 60.54 & 36.83 & 37.67 & 27.67 & 34.06 \\
          InfICL & 63.83 & 54.17 & 80.67 & 66.22  & 59.38 & 62.88 & 84.15 & 68.79  & 38.38 & 38.00 & 32.83 & 36.39 \\
          DICL & 68.67 & 62.00 & 78.00 & 69.56 & 57.38 & 54.75 & 80.63 & 64.25 & 37.00 & 36.67 & 20.50 & 31.39 \\
          \cellcolor{rowgray}DOPA*   & \cellcolor{rowgray}67.83 & \cellcolor{rowgray}62.00 & \cellcolor{rowgray}81.17 & \textbf{70.33}  & \cellcolor{rowgray}59.50 & \cellcolor{rowgray}65.25 & \cellcolor{rowgray}83.25 & \textbf{69.33}  & \cellcolor{rowgray}38.67 & \cellcolor{rowgray}40.83 & \cellcolor{rowgray}32.67 & \textbf{37.39} \\
    \midrule
    \rowcolor{black!8} \multicolumn{13}{c}{\textbf{\texttt{GPT3.5-turbo}}} \\
          KNN    & 67.17 & 60.00 & 78.83 & 68.67  & 58.38 & 60.50 & 82.23 & 67.04  & 38.00 & 40.00 & 30.83 & 36.28 \\
          Rewrite & 66.00 & 55.17 & 75.50 & 65.56 & 54.63 & 53.75 & 73.50 & 60.63 & 37.00 & 38.00 & 28.67 & 34.56 \\
          InfICL & 66.00 & 55.83 & 80.17 & 67.33  & 60.00 & 64.13 & 84.50 & \textbf{69.54}  & 37.67 & 39.50 & 32.50 & \textbf{36.56} \\
          DICL & 61.83 & 43.50 & 62.33 & 55.89 & 38.50 & 39.62 & 40.63 & 39.58 & 10.67 & 13.67 & 8.17 & 10.83 \\
          \cellcolor{rowgray}DOPA*   & \cellcolor{rowgray}68.00 & \cellcolor{rowgray}61.17 & \cellcolor{rowgray}80.50 & \textbf{69.89}  & \cellcolor{rowgray}58.50 & \cellcolor{rowgray}66.13 & \cellcolor{rowgray}82.63 & 69.08  & \cellcolor{rowgray}38.17 & \cellcolor{rowgray}39.33 & \cellcolor{rowgray}32.00 & 36.50 \\
    \bottomrule
  \end{tabular}
  \caption{The performance (accuracy \%) on classification tasks based on closed-source models.}
  \label{tab:gpt_cls2}
\end{table*}

\begin{table}[h]
  \centering
  \setlength\tabcolsep{5pt}
  \renewcommand{\arraystretch}{0.3}
  \footnotesize
    \begin{tabular}{cc!{\color{white}\vrule width 1pt}
        c!{\color{white}\vrule width 1pt}c!{\color{white}\vrule width 1pt}>{\columncolor{colblue}}c!{\color{white}\vrule width 1pt} cc!{\color{white}\vrule width 1pt}}  \toprule
      \textbf{LLMs} & \textbf{Methods} & \textbf{wnut} & \textbf{ener} & \textbf{avg} \\
      \midrule
      \multirow{5}[5]{*}{\texttt{GPT2-xl}} 
        & Random & 22.85 & 32.83 & 27.84 \\
        & KNN & 51.03 & 54.47 & 52.75 \\
        & DrICL & 19.65 & 23.29 & 21.47 \\
        & DICL & 22.95 & 31.76 & 27.36 \\
        & \cellcolor{rowgray}DOPA & \cellcolor{rowgray}51.32 & \cellcolor{rowgray}56.82 & \cellcolor{rowgray}\textbf{54.07} \\
      \midrule
      \multirow{5}[5]{*}{\texttt{Qwen3-1.7B}} 
        & Random & 25.92 & 28.71 & 27.32 \\
        & KNN & 38.74 & 50.62 & 44.68 \\
        & DrICL & 39.59 & 45.33 & 42.46 \\
        & DICL & 40.44 & 48.66 & 44.55 \\
        & \cellcolor{rowgray}DOPA & \cellcolor{rowgray}41.25 & \cellcolor{rowgray}49.62 & \cellcolor{rowgray}\textbf{45.44} \\
      \midrule
      \multirow{5}[5]{*}{\texttt{LLaMA3.2-3B}} 
        & Random & 29.54 & 41.13 & 35.34 \\
        & KNN & 33.50 & 40.97 & 37.23 \\
        & DrICL & 28.87 & 32.08 & 30.47 \\
        & DICL & 31.17 & 41.10 & 36.13 \\
        & \cellcolor{rowgray}DOPA & \cellcolor{rowgray}37.19 & \cellcolor{rowgray}41.40 & \cellcolor{rowgray}\textbf{39.29} \\
      \midrule
      \multirow{5}[5]{*}{\texttt{Gemma2-2B}} 
        & Random & 20.70 & 22.88 & 21.79 \\
        & KNN & 29.64 & 37.70 & \textbf{33.67} \\
        & DrICL & 24.30 & 27.12 & 25.71 \\
        & DICL & 28.59 & 37.05 & 32.82 \\ 
        & \cellcolor{rowgray}DOPA & \cellcolor{rowgray}29.67 & \cellcolor{rowgray}37.37 & \cellcolor{rowgray}33.52 \\
      \midrule
      \multirow{5}[5]{*}{\texttt{LLaMA3.1-8B}} 
        & Random & 42.85 & 43.15 & 43.00\\
        & KNN & 47.20 & 50.90 & 49.05 \\
        & DrICL & 47.72 & 44.78 & 46.25 \\
        & DICL & 48.14 & 50.73 & 49.44 \\ 
        & \cellcolor{rowgray}DOPA & \cellcolor{rowgray}51.66 & \cellcolor{rowgray}52.28 & \cellcolor{rowgray}\textbf{51.97} \\
      \midrule
      \multirow{5}[5]{*}{\texttt{Qwen3-8B}} 
        & Random & 58.02 & 55.78 & 56.90 \\
        & KNN & 56.48 & 56.82 & 56.65 \\
        & DrICL & 56.03 & 55.10 & 55.56 \\
        & DICL & 57.48 & 56.46 & 56.97 \\ 
        & \cellcolor{rowgray}DOPA & \cellcolor{rowgray}58.75 & \cellcolor{rowgray}56.40 & \cellcolor{rowgray}57.57 \\
      \midrule
      \multirow{5}[5]{*}{\texttt{GPT4o-mini}} 
        & Random & 42.98 & 25.20 & 34.09 \\
        & KNN & 44.13 & 24.02 & 34.08 \\
        & DrICL & 46.72 & 25.50 & 36.11 \\
        & DICL & 47.29 & 22.33 & 34.81 \\
        & \cellcolor{rowgray}DOPA & \cellcolor{rowgray}51.49 & \cellcolor{rowgray}38.19 & \cellcolor{rowgray}\textbf{44.84} \\
      \midrule
      \multirow{5}[5]{*}{\texttt{GPT3.5-turbo}} 
        & Random & 44.84 & 28.97 & 36.90 \\
        & KNN & 46.65 & 25.11 & 35.88 \\
        & DrICL & 45.30 & 28.23 & 36.77 \\
        & DICL & 29.40 & 22.22 & 25.81  \\
        & \cellcolor{rowgray}DOPA & \cellcolor{rowgray}50.00 & \cellcolor{rowgray}35.29 & \cellcolor{rowgray}\textbf{42.65} \\
      \bottomrule
    \end{tabular}
  \caption{The performance on NER tasks. }
  \label{tab:gpt_ner}
\end{table}

\begin{table}[h]
  \centering
  \setlength\tabcolsep{4.5pt}
  \renewcommand{\arraystretch}{0.1}
  \small   
  \begin{tabular}{ccc!{\color{white}\vrule width 1pt}
    c!{\color{white}\vrule width 1pt}c!{\color{white}\vrule width 1pt}>{\columncolor{colblue}}c!{\color{white}\vrule width 1pt}}
    \toprule
    \textbf{Variants} 
      & \textbf{SA} & \textbf{TD} & \textbf{NLI} & \textbf{NER} & \textbf{avg} \\
    \midrule
    \multicolumn{6}{c}{\textbf{\texttt{LLaMA3.2-3B}}} \\
    \midrule
    DOPA$_{-mah}$ & 56.59 & 58.67 & 38.66 & 37.13 & 47.76$_{\downarrow 1.64}$ \\
    DOPA$_{-sim}$ & 56.15 & 57.62 & 37.24 & 36.24 & 46.81$_{\downarrow 2.59}$ \\
    DOPA$_{-pro}$ & 55.81 & 59.36 & 38.41 & 37.21 & 47.70$_{\downarrow 1.70}$ \\
    DOPA$_{uni}$  & 57.46 & 59.67 & 38.52 & 34.57 & 47.56$_{\downarrow 1.84}$ \\
    \rowcolor{rowgray}
    DOPA          & 59.29 & 59.87 & 39.15 & 39.29 & 49.40 \\
    \midrule
    \multicolumn{6}{c}{\textbf{\texttt{Qwen3-1.7B}}} \\
    \midrule
    DOPA$_{-mah}$ & 64.33 & 58.54 & 38.92 & 44.26 & 51.51$_{\downarrow 0.87}$ \\
    DOPA$_{-sim}$ & 61.64 & 58.00 & 39.58 & 37.50 & 49.18$_{\downarrow 3.20}$ \\
    DOPA$_{-pro}$ & 61.89 & 58.31 & 39.21 & 44.60 & 51.00$_{\downarrow 1.38}$ \\
    DOPA$_{uni}$  & 63.57 & 58.14 & 38.87 & 42.17 & 50.69$_{\downarrow 1.69}$ \\
    \rowcolor{rowgray}
    DOPA          & 64.93 & 59.22 & 39.92 & 45.44 & 52.38 \\
    \bottomrule
  \end{tabular}
  \caption{Ablation study results on \texttt{LLaMA3.2-3B} and \texttt{Qwen3-1.7B}.}
  \label{tab:abl}
\end{table}

\subsection{Experimental Results}
We present the comparison results of DOPA with the aforementioned baseline methods on different LLMs in Table~\ref{tab:gpt_cls}, Table~\ref{tab:gpt_cls2}, and Table~\ref{tab:gpt_ner}. We do not compare InfICL and Rewrite on the NER task because, for token-level tasks, the influence of individual samples is difficult to quantify, and sentence rewriting may change the original entities. As an alternative, we compare with KNN, DrICL and DICL, which are not affected by the type of task. 

For classification tasks (Table~\ref{tab:gpt_cls}), DOPA exhibits noticeable performance degradation in only a few cases, demonstrating strong robustness under distribution shift. Wilcoxon signed-rank tests over nine evaluation tasks further confirm that DOPA significantly outperforms all baselines. By contrast, several recent methods fail to consistently surpass random selection in OOD settings. Notably, under \texttt{LLaMA3.2-3B}, Random sampling consistently outperforms similarity-based retrieval such as KNN, highlighting the persistent challenges posed by distribution shift and the instability of purely semantic retrieval. Methods leveraging additional signals show mixed effectiveness. DrICL, which incorporates LLM feedback to train a dense retriever, generally improves upon KNN but remains inferior to DOPA overall. The Rewrite strategy performs poorly, likely due to the absence of target-domain samples, which constrains the quality of rewritten demonstrations. InfICL achieves performance comparable to DOPA in a few settings (e.g., TD with \texttt{Qwen3-1.7B} and \texttt{GPT3.5-turbo}) but exhibits substantial instability, performing worst on NLI with \texttt{GPT2-xl} and \texttt{LLaMA3.2-3B}, and on SA with \texttt{Gemma2-2B}. Finally, the diversity-based method DICL fails to reliably alleviate distribution shift and can even degrade performance, as observed in SA with \texttt{Qwen3-1.7B} and NLI with \texttt{GPT4o-mini}.

For generative NER tasks (Table~\ref{tab:gpt_ner}), DOPA yields more pronounced performance gains, likely due to the higher difficulty of NER compared to classification tasks, which makes it more sensitive to the distribution of demonstration samples. We further observe that KNN-based retrieval benefits lightweight, locally deployable LLMs, as these models rely more heavily on external examples for guidance. In contrast, for larger models such as \texttt{GPT4o-mini} and \texttt{GPT3.5-turbo}, KNN retrieval can be detrimental. This is likely because their stronger reasoning capabilities and heightened sensitivity to distribution shifts make them more susceptible to misleading demonstrations that are semantically similar but distributionally mismatched. Overall, DOPA consistently improves performance while maintaining robustness across diverse tasks and models. Motivated by these results, we proceed to analyze the underlying mechanisms of DOPA.

\subsection{Experimental Analysis}

\begin{figure*}[h]
    \centering
    \includegraphics[width=0.99\linewidth]{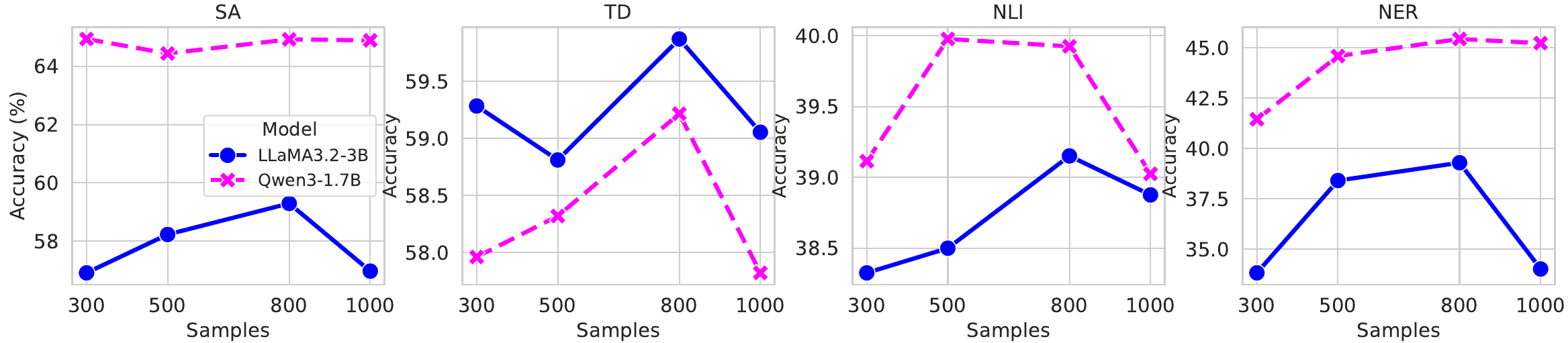}
    \caption{Performance influence of $k$ on \texttt{LLaMA3.2-3B} and \texttt{Qwen3-1.7B} across tasks.}
    \label{fig:k}
\end{figure*}

\begin{figure*}[h]
    \centering
    \includegraphics[width=0.99\linewidth]{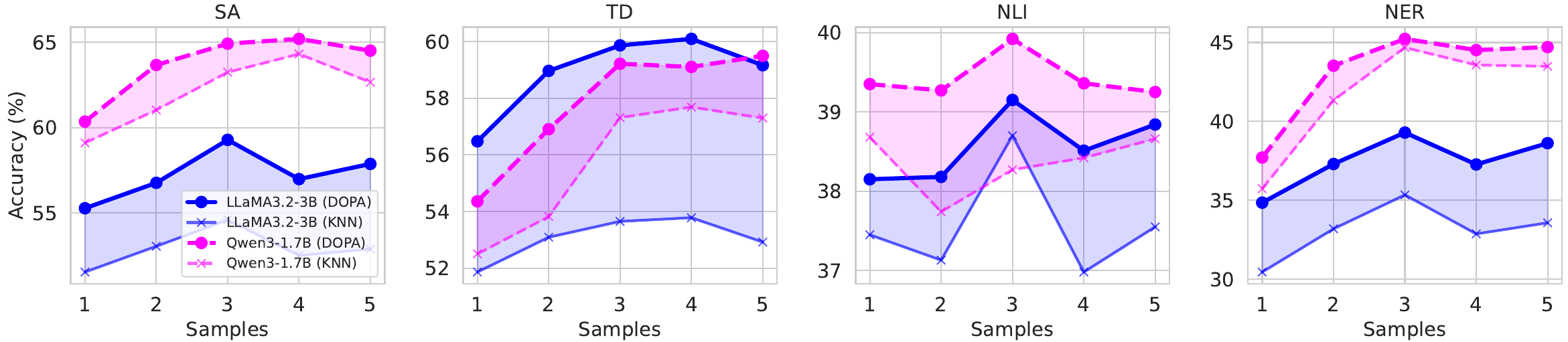}
    \caption{Performance influence of $N$ on DOPA and KNN based on \texttt{LLaMA3.2-3B} and \texttt{Qwen3-1.7B}, the shaded areas with corresponding colors indicate the performance differences. }
    \label{fig:N}
\end{figure*}

\subsubsection{Ablation Study}
To assess the contribution of each component in DOPA, we conduct an ablation study comparing several variants (Table~\ref{tab:abl}). Specifically, \textbf{DOPA${-pro}$} removes the OOD proxy and relies solely on representation similarity; \textbf{DOPA${-sim}$} discards semantic similarity and performs retrieval purely based on the OOD proxy; \textbf{DOPA${-mah}$} removes the Mahalanobis distance–based diversity constraint; and \textbf{DOPA${uni}$} replaces the LLM-based target proxy with a uniform distribution to empirically validate Lemma~\ref{lem:1}. All variants result in performance degradation, with the smallest drop observed for DOPA${-mah}$, followed by DOPA${-pro}$ and DOPA${uni}$, and the largest drop for DOPA${-sim}$. These results highlight the complementary roles of DOPA’s components: the OOD proxy enables coarse target-domain alignment, semantic similarity provides critical fine-grained filtering, and the diversity constraint promotes representative demonstrations. Notably, replacing the learned proxy with a uniform one leads to substantial degradation, supporting the effectiveness of the LLM-based proxy and the validity of Lemma~\ref{lem:1}. Overall, proxy-based filtering and diversity-aware retrieval jointly contribute to improved demonstration quality and model performance.

\subsubsection{Exploration of $k$}
We investigate the impact of varying $k \in {300, 500, 800, 1000}$ on demonstration selection and model performance (Figure~\ref{fig:k}). Results indicate that small $k$ values limit example diversity and hinder generalization, whereas larger $k$ values introduce noisy or redundant demonstrations that may degrade performance. Based on systematic evaluations across multiple tasks and datasets, we adopt $k=800$ as a unified setting, as it provides a favorable balance between diversity and noise, despite not being optimal in all cases. Using a fixed $k$ also simplifies the selection process and ensures more consistent and comparable performance across tasks.

\subsubsection{Exploration of $N$}
We study the effect of varying $N \in {1,2,3,4,5}$ on model performance (Figure~\ref{fig:N}), using KNN as a task-agnostic and stable baseline. Here, $N$ corresponds to $N \times |Y|$ demonstration samples. Performance generally improves with more demonstrations before gradually saturating~\cite{MinLHALHZ22}, with the saturation point varying across models and tasks. For instance, in SA, \texttt{Qwen3-1.7B} peaks at $N=4$, while \texttt{LLaMA3.2-3B} peaks at $N=3$. Across all $N$ settings, DOPA consistently outperforms KNN by a substantial margin. For simplicity and consistency in the main experiments, we fix $N=3$.

\subsubsection{Visualization}
\begin{figure}[h]
  \centering
  \begin{subfigure}[t]{0.245\textwidth}
    \centering
    \includegraphics[width=\textwidth]{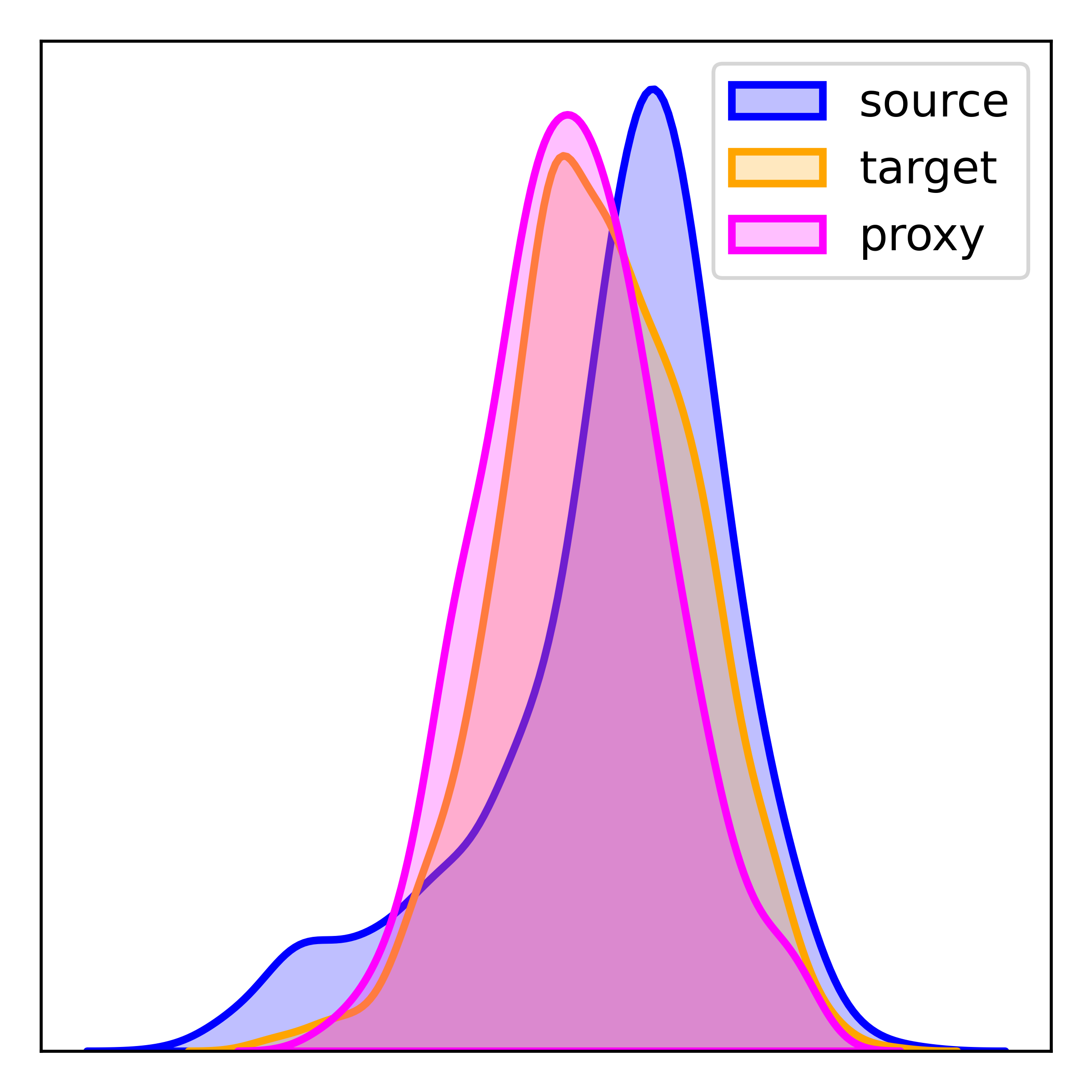}
    \caption{KDE visualization.}
    \label{fig:sst_dke}
  \end{subfigure}
  \begin{subfigure}[t]{0.23\textwidth}
    \centering
    \includegraphics[width=\textwidth]{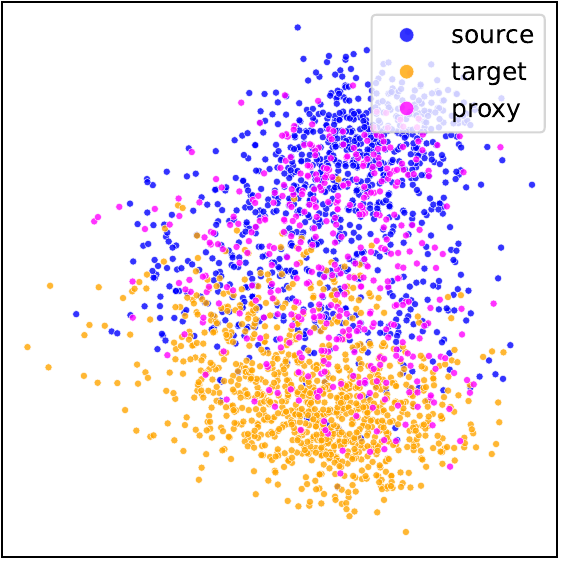}
    \caption{t-SNE visualization.}
    \label{fig:sst_tsne}
  \end{subfigure}
  \begin{subfigure}[t]{0.48\textwidth}
    \centering
    \includegraphics[width=\textwidth]{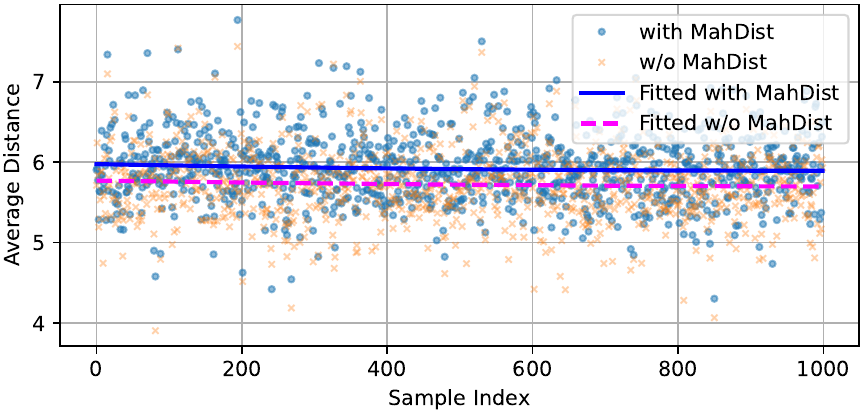}
    \caption{Euclidean distance comparison to target domain samples for retrieval results with and without the diversity constraint (with MahDist and w/o MahDist).}
    \label{fig:sst_vis2}
  \end{subfigure}
  \caption{Different visualization results on \textit{sst}. }
  \label{fig:sst_vis}
\end{figure}
We validate the effectiveness of the proposed OOD proxy through visualization of proxy-based sample selection. Specifically, we characterize the behaviors of $\mathcal{D}_S$, $\hat{\mathcal{D}}_S$, and $\mathcal{D}_T$ by computing BERT-based energy scores~\cite{LiuWOL20} and estimating their distributions via kernel density estimation (KDE)~\cite{wkeglarczyk2018kernel}. We adopt energy-score distributions rather than the commonly used t-SNE visualizations, as representational proximity alone does not adequately reflect OOD tendencies. As shown in Figure~\ref{fig:sst_vis}, the proxy-selected samples exhibit an energy-score distribution that more closely matches and overlaps with that of the target domain (Figure~\ref{fig:sst_dke}), indicating effective source-to-target alignment. In contrast, t-SNE visualizations (Figure~\ref{fig:sst_tsne}) show that proxy-selected samples remain closer to the source domain in representation space, suggesting residual semantic distance from the target domain. This discrepancy further explains the results in Table~\ref{tab:gpt_cls}, demonstrating that semantic similarity–based retrieval (e.g., KNN) is insufficient for OOD adaptation, whereas DOPA effectively addresses the limitations of purely representation-based methods.

Additionally, to assess the effectiveness of the diversity constraint, we compute the Euclidean distances between retrieved demonstrations and their corresponding test samples for the first 1,000 test instances, under both with- and without-MahDist settings. Figure~\ref{fig:sst_vis2} presents the corresponding fitted distance distributions. The curve without MahDist consistently lies below that with MahDist, indicating that the diversity constraint encourages more varied retrieval. Importantly, the two curves remain close, suggesting that this increased diversity is controlled and does not introduce excessive semantic drift. In summary, the visualization results provide evidence for the effectiveness of DOPA from two perspectives: it helps retrieve demonstrations that exhibit similar behavior to target domain samples while maintaining high diversity, thereby enhancing the performance of ICL. We also observe similar trends across the remaining datasets. Additional visualization results are in Appendix~\ref{app:vis}.

\section{Conclusion}
This paper shows that OOD proxies can effectively retrieve source samples aligned with the target domain under distribution shift. Based on this insight, we propose DOPA, a target-free framework enhanced with a diversity constraint to mitigate proxy bias. Experiments across multiple LLMs confirm its effectiveness. Future work will focus on more robust proxy estimation for unknown target domains.

\section*{Limitations}
Since target-domain data is unavailable, the constructed target-domain proxies may not fully capture the true distributional characteristics of the target dataset, inevitably introducing approximation errors. Consequently, developing more accurate and robust target-domain proxy construction methods remains an important direction for future work. In addition, evaluating DOPA across a broader range of LLM families would further clarify its generality and adaptability.

\section*{Acknowledgements}
This work is supported by the National Natural Science Foundation of China (NSFC): “Research on Understanding Ancient Characters Based on Multi-modal Large Models” (Grant No. 62476111), China Postdoctoral Science Foundation Funded Project (Grant No. 2024M761122), the Scientific Research Project of the Education Department of Jilin Province (Grant No. JJKH20261299KJ) and the Industry University Research Innovation Fund of the Ministry of Education project "Research and Application of an Integrated Teaching Model for Human centered Artificial Intelligence" (Grant No. 2022XF017).

\bibliography{custom}

\appendix

\section{Theoretical Analysis and Proof} \label{app:th}
The following provides a detailed proof of the boundedness of proxy errors.
\begin{proof}
We use shorthand notation: let $P_t := P_{\mathrm{target}}$, $P_s := P_{\mathrm{source}}$, $P_t^p := P_{\mathrm{target}}^{\mathrm{proxy}}$, and $P_s^p := P_{\mathrm{source}}^{\mathrm{proxy}}$.

We aim to bound the log-likelihood ratio error:
\[
\Delta(x) := \left| \log \frac{P_t(x)}{P_s(x)} - \log \frac{P_t^p(x)}{P_s^p(x)} \right|
\]

Applying the triangle inequality:
\[
\begin{aligned}[t]
& \Delta(x) = \left| \log \frac{P_t(x)}{P_t^p(x)} - \log \frac{P_s(x)}{P_s^p(x)}  \right| \\
&\le \left| \log \frac{P_t(x)}{P_t^p(x)} \right|
   +  \left| \log \frac{P_s(x)}{P_s^p(x)} \right|
\end{aligned}
\]

We now upper bound each term. Then, from the definition of KL divergence:
\[
D_{\mathrm{KL}}(P_t \| P_t^p) = \sum_x P_t(x) \log \frac{P_t(x)}{P_t^p(x)} \leq \varepsilon_t
\]

Now, suppose for some $x$ we have $P_t(x) \ge m_t$ and
\[
\left| \log \frac{P_t(x)}{P_t^p(x)} \right| > \frac{\varepsilon_t}{m_t}
\]
Then,
\[
P_t(x) \cdot \left| \log \frac{P_t(x)}{P_t^p(x)} \right| > m_t \cdot \frac{\varepsilon_t}{m_t} = \varepsilon_t
\]
This contradicts the assumption \( D_{\mathrm{KL}}(P_t \| P_t^p) \leq \varepsilon_t \). Therefore, for all $x$:
\[
\left| \log \frac{P_t(x)}{P_t^p(x)} \right| \leq \frac{\varepsilon_t}{m_t}
\]

Analogously, we obtain:
\[
\left| \log \frac{P_s(x)}{P_s^p(x)} \right| \leq \frac{\varepsilon_s}{m_s}
\]

Combining the two bounds:
\[
\Delta(x) \leq \frac{\varepsilon_t}{m_t} + \frac{\varepsilon_s}{m_s},
\]
the proof of the theorem is complete.
\end{proof}

The theorem shows that if the KL-divergence between the true distribution and its proxy is sufficiently small, and the probability mass at each point is lower bounded, then the deviation in log-probability ratios is controllable in expectation. Therefore, a properly constructed proxy distribution yields bounded error in tasks such as density ratio estimation or scoring, which verifies the effectiveness and reliability of using proxies.

Moreover, some methods propose a general approach by replacing the target-domain proxy with a uniform distribution. However, this strong assumption may lead to suboptimal solutions. Accordingly, we introduce Lemma~\ref{lem:1} to illustrate the limitations of using a uniform distribution.
\begin{proof}[Proof of Lemma~\ref{lem:1}]
We consider the case where the proxy distribution for the target domain is chosen as the uniform distribution over the support $\mathcal{X}$:
\[
P_t^p(x) = \frac{1}{|\mathcal{X}|} \quad \text{for all } x \in \mathcal{X}
\]

From Theorem~\ref{th:1}, the error in the log-likelihood ratio satisfies:
\begin{align*}
\left| \log \frac{P_t(x)}{P_s(x)} - \log \frac{P_t^p(x)}{P_s^p(x)} \right| \\
\leq \left| \log \frac{P_t(x)}{P_t^p(x)} \right| + \left| \log \frac{P_s(x)}{P_s^p(x)} \right|
\end{align*}

We now focus on bounding the first term with the uniform proxy:
\begin{align*}
    A_t(x) := \left| \log \frac{P_t(x)}{P_t^p(x)} \right| = \left| \log \left( P_t(x) \cdot |\mathcal{X}| \right) \right|  \\ =  \left| \log P_t(x) + \log |\mathcal{X}| \right|
\end{align*}

From the definition of KL divergence between $P_t$ and uniform distribution $U$:
\begin{align*}
D_{\mathrm{KL}}(P_t \| U) 
&= \sum_x P_t(x) \log \frac{P_t(x)}{1/|\mathcal{X}|}  \\
&= \sum_x P_t(x) [\log P_t(x) + \log |\mathcal{X}|] \\
&= \log |\mathcal{X}| - H(P_t)
\end{align*}
where \( H(P_t) := -\sum_x P_t(x)\log P_t(x) \) is the Shannon entropy of $P_t$.

Now suppose that $P_t(x) \ge m_t > 0$ for all $x$. Following the same logic as in the proof of Theorem~\ref{th:1}, we know that if:
\[
\left| \log \frac{P_t(x)}{P_t^p(x)} \right| > \frac{D_{\mathrm{KL}}(P_t \| U)}{m_t}
\]
Then this point would contribute more than \( D_{\mathrm{KL}}(P_t \| U) \) to the KL divergence, leading to a contradiction. Therefore, for all \( x \):
\[
\left| \log \frac{P_t(x)}{P_t^p(x)} \right| \leq \frac{D_{\mathrm{KL}}(P_t \| U)}{m_t} = \frac{\log |\mathcal{X}| - H(P_t)}{m_t}
\]

Substituting into the total bound in Theorem~\ref{th:1}, we obtain:
\[
\left| \log \frac{P_t(x)}{P_s(x)} - \log \frac{P_t^p(x)}{P_s^p(x)} \right|
\leq \frac{\log |\mathcal{X}| - H(P_t)}{m_t} + \frac{\varepsilon_s}{m_s}
\]

This upper bound is typically looser than the one obtained when $P_t^p$ approximates $P_t$ well (i.e., KL divergence is small), since $\log |\mathcal{X}| - H(P_t)$ can be large when $P_t$ is sharply peaked.

\end{proof}

\begin{figure*}[h]
  \centering
  \begin{subfigure}[t]{0.23\textwidth}
    \centering
    \includegraphics[width=\textwidth]{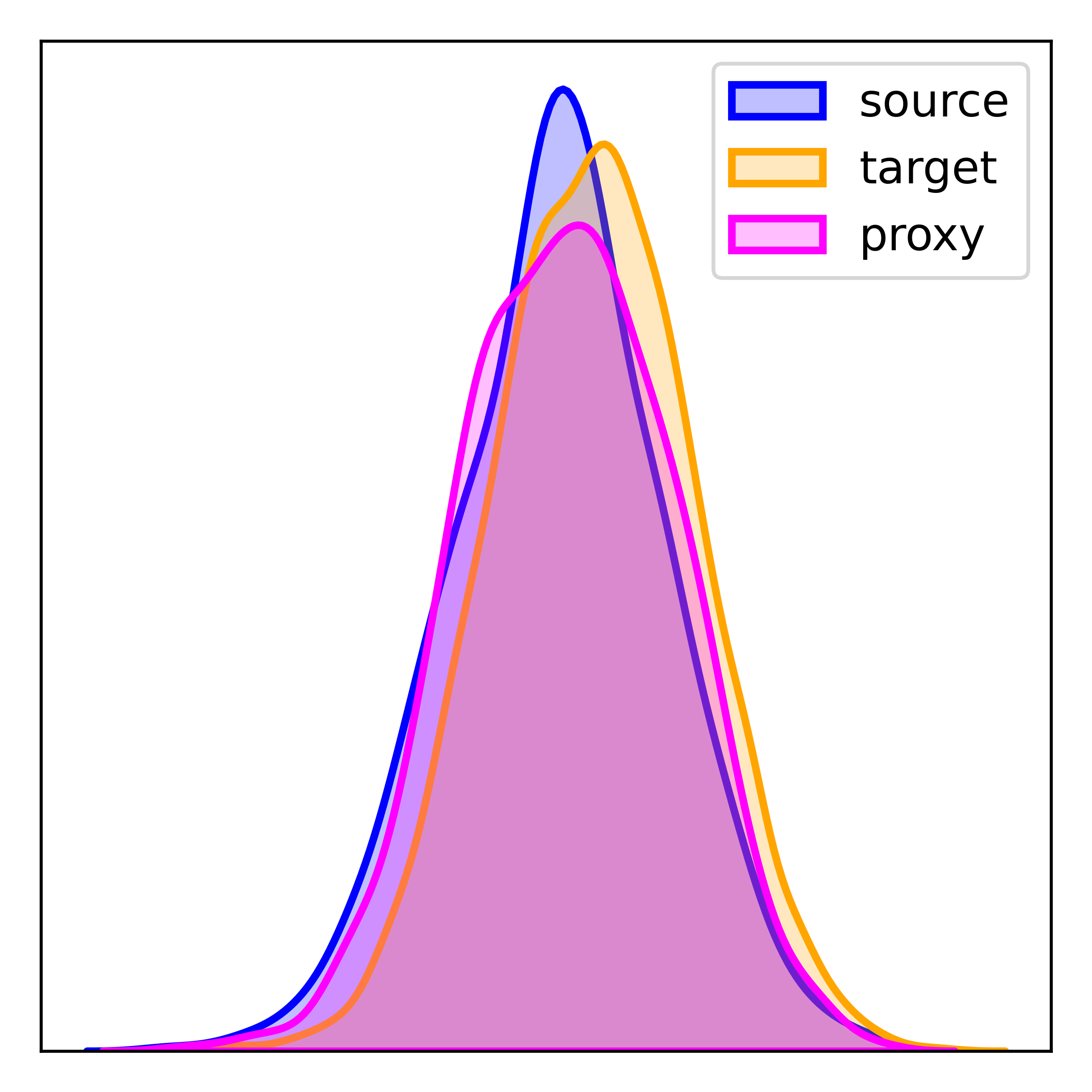}
    \caption{dynasent}
  \end{subfigure}
  \begin{subfigure}[t]{0.23\textwidth}
    \centering
    \includegraphics[width=\textwidth]{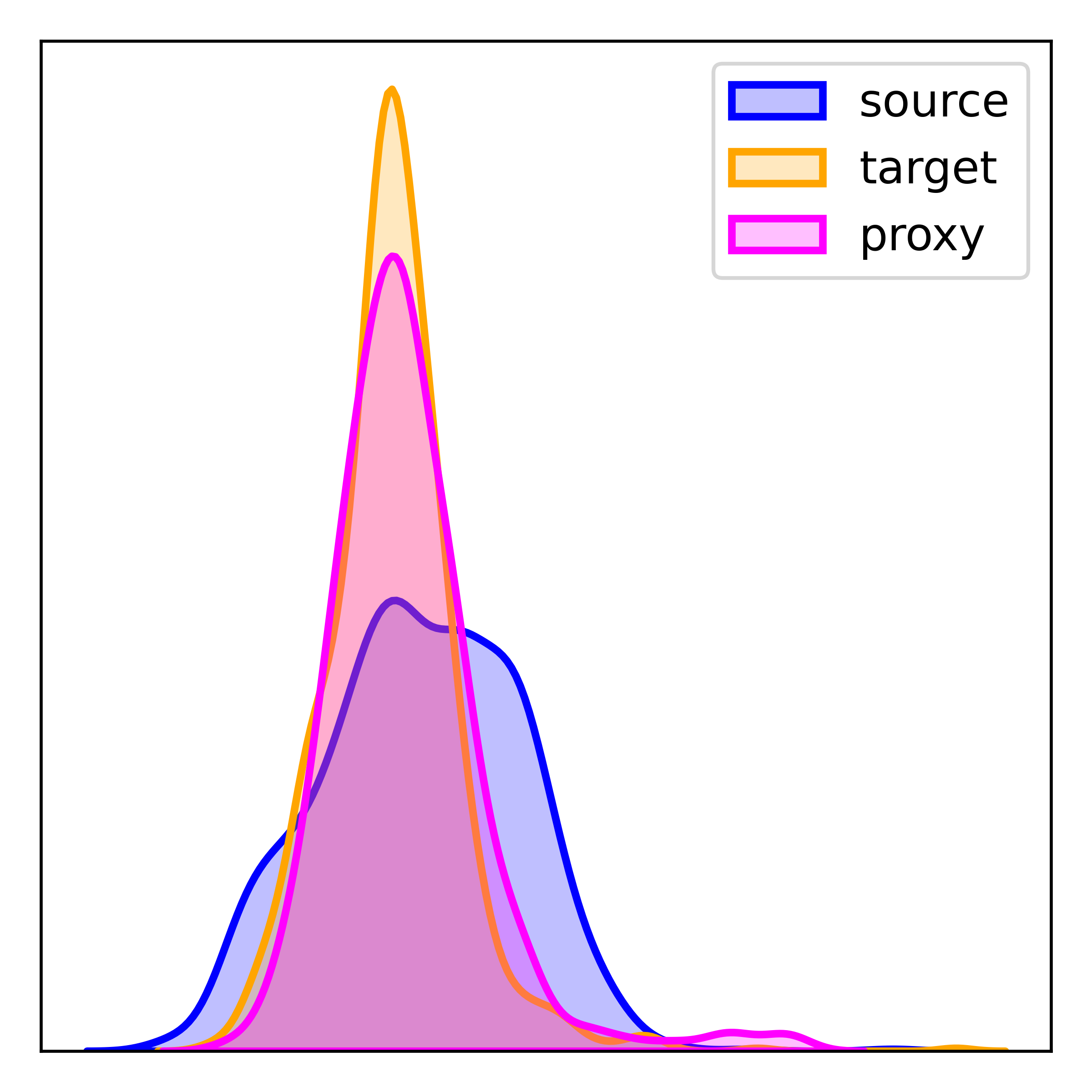}
    \caption{semeval}
  \end{subfigure}
    \begin{subfigure}[t]{0.23\textwidth}
    \centering
    \includegraphics[width=\textwidth]{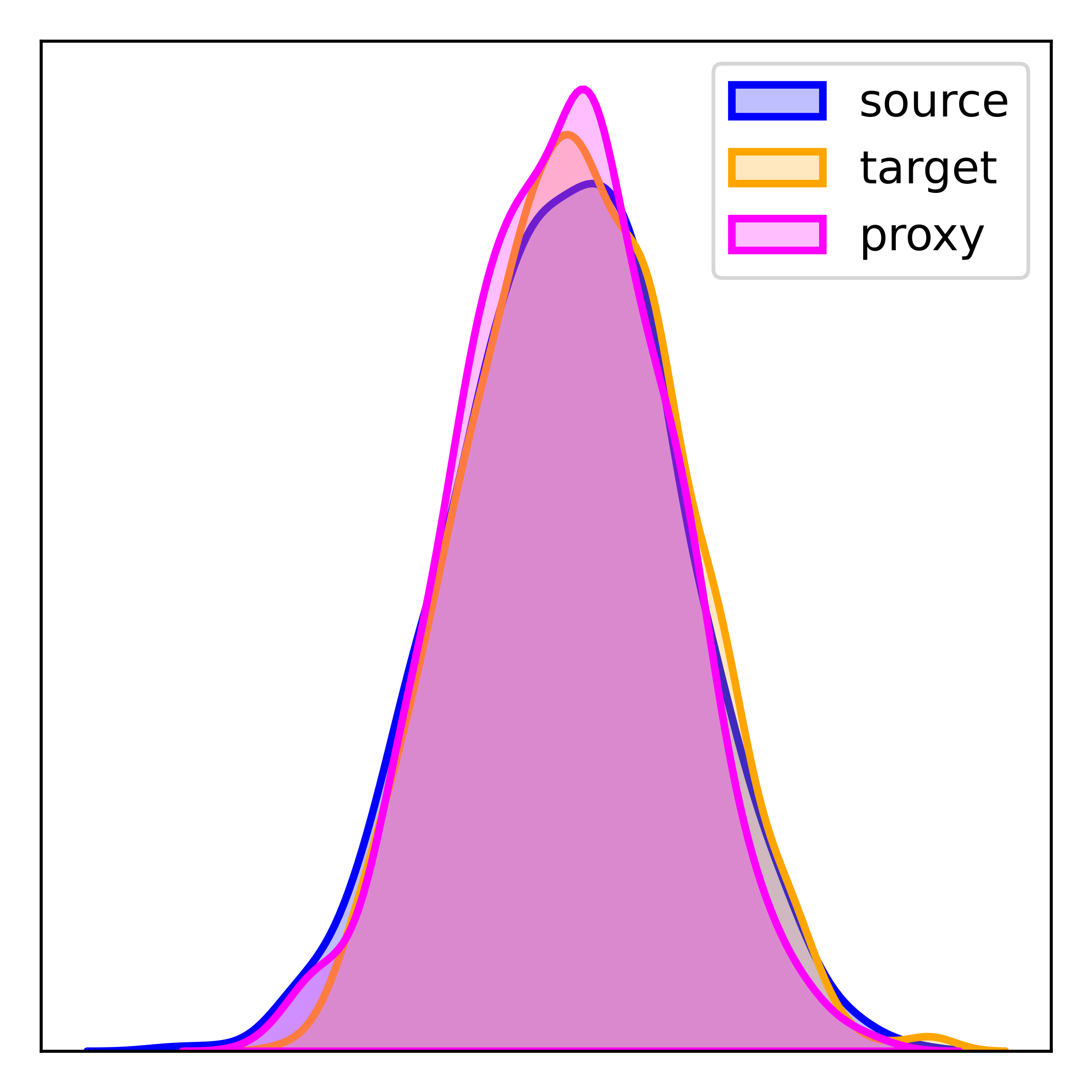}
    \caption{adv\_civil}
  \end{subfigure}
  \begin{subfigure}[t]{0.23\textwidth}
    \centering
    \includegraphics[width=\textwidth]{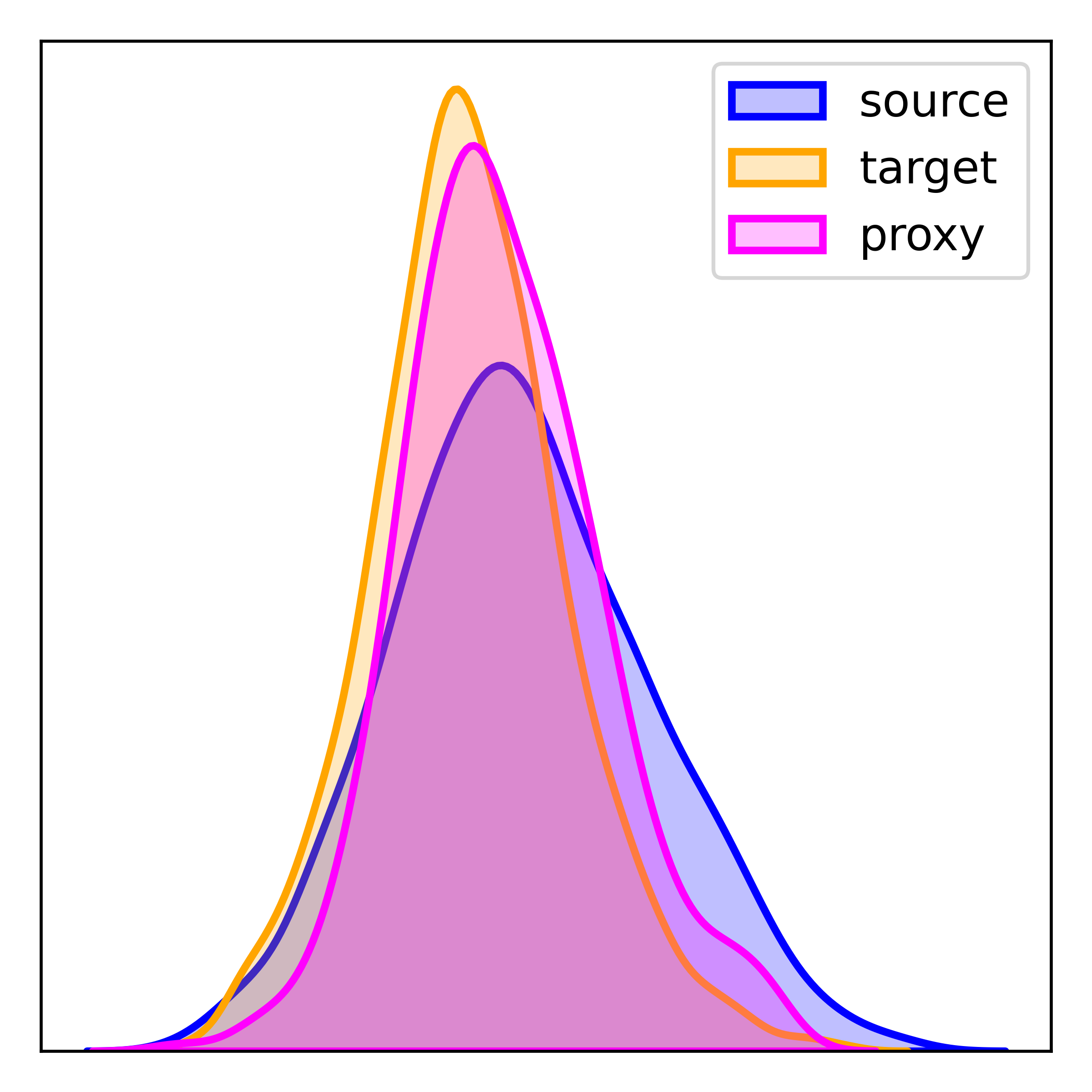}
    \caption{implicit\_hate}
  \end{subfigure}
  
    \begin{subfigure}[t]{0.23\textwidth}
    \centering
    \includegraphics[width=\textwidth]{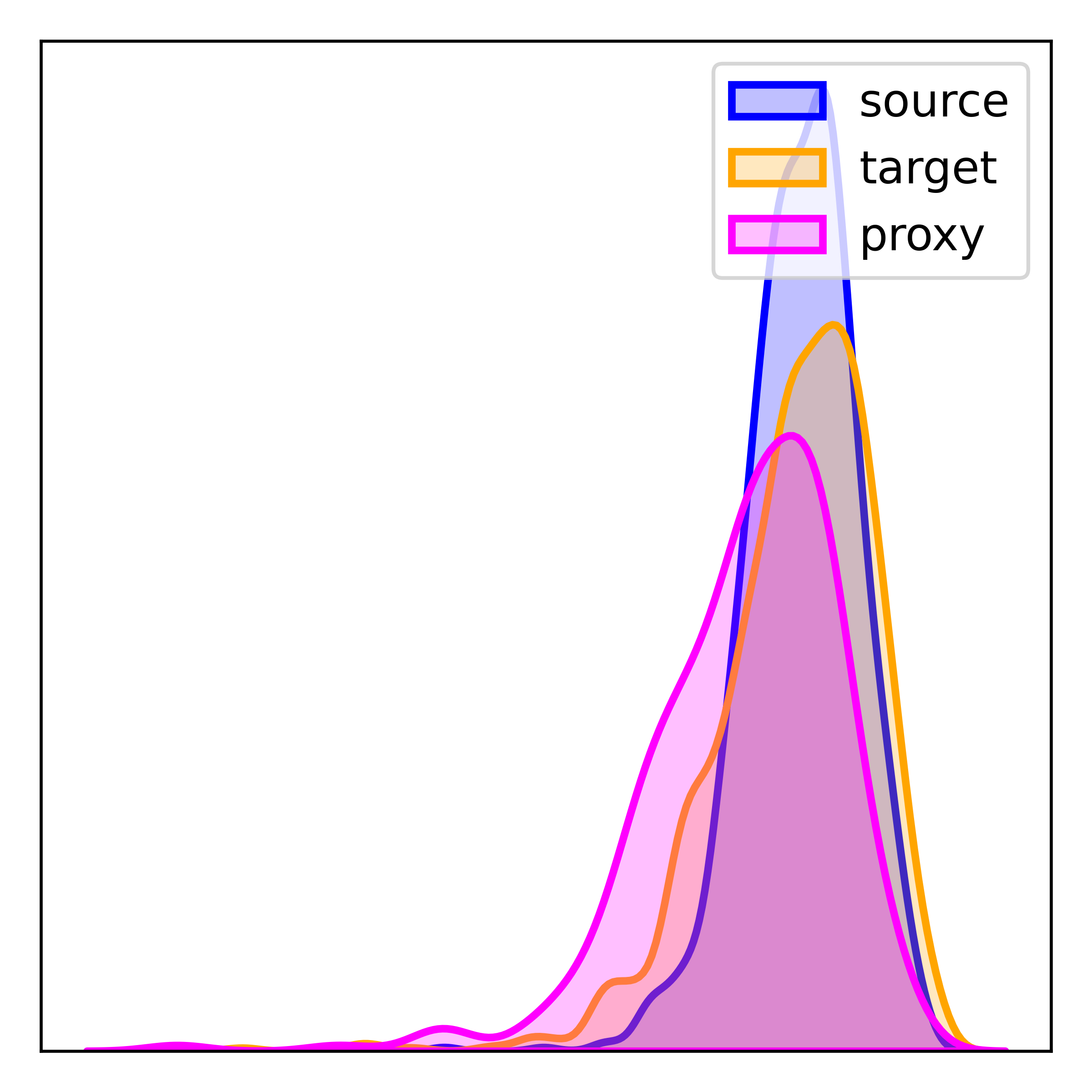}
    \caption{toxigen}
  \end{subfigure}
  \begin{subfigure}[t]{0.23\textwidth}
    \centering
    \includegraphics[width=\textwidth]{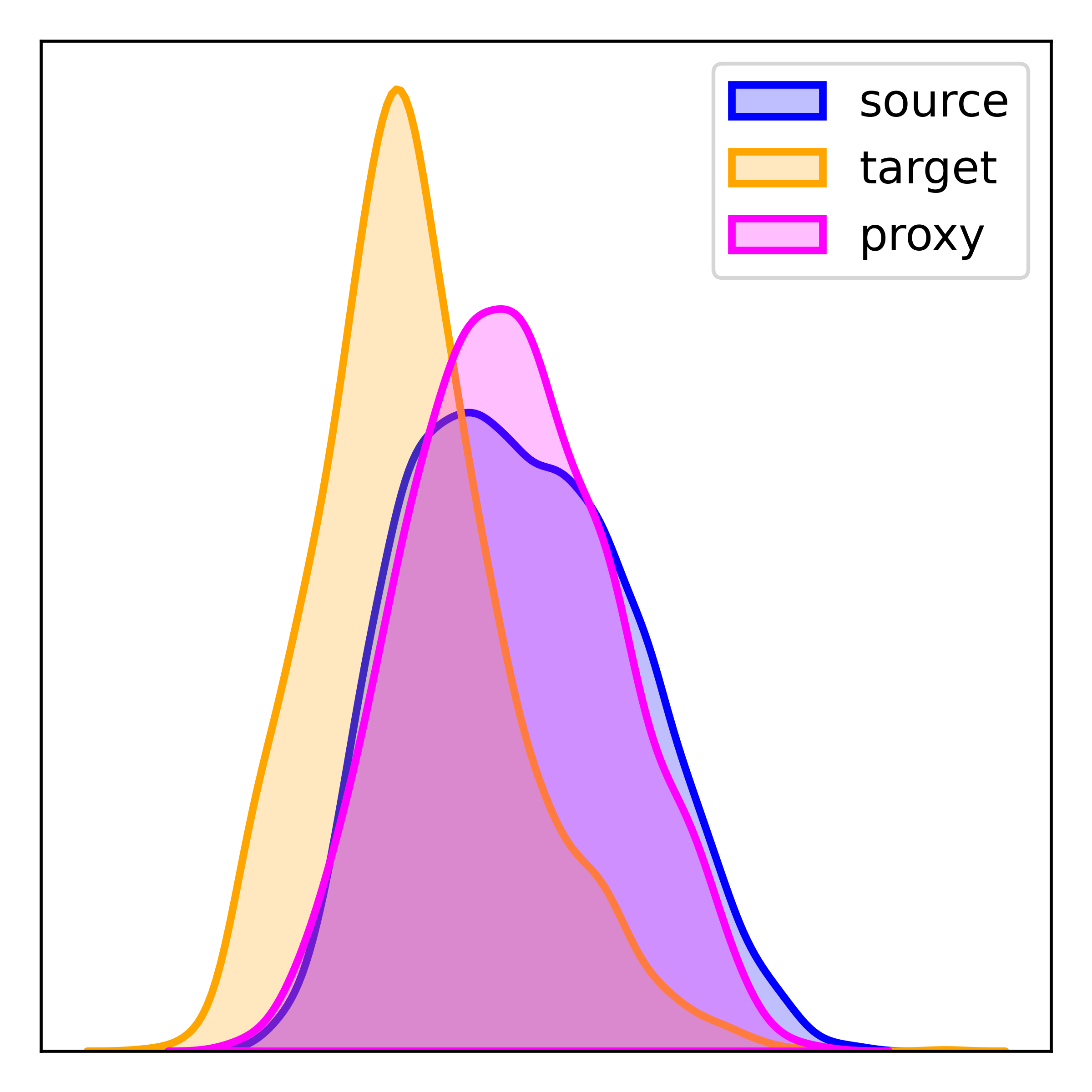}
    \caption{anli}
  \end{subfigure}
    \begin{subfigure}[t]{0.23\textwidth}
    \centering
    \includegraphics[width=\textwidth]{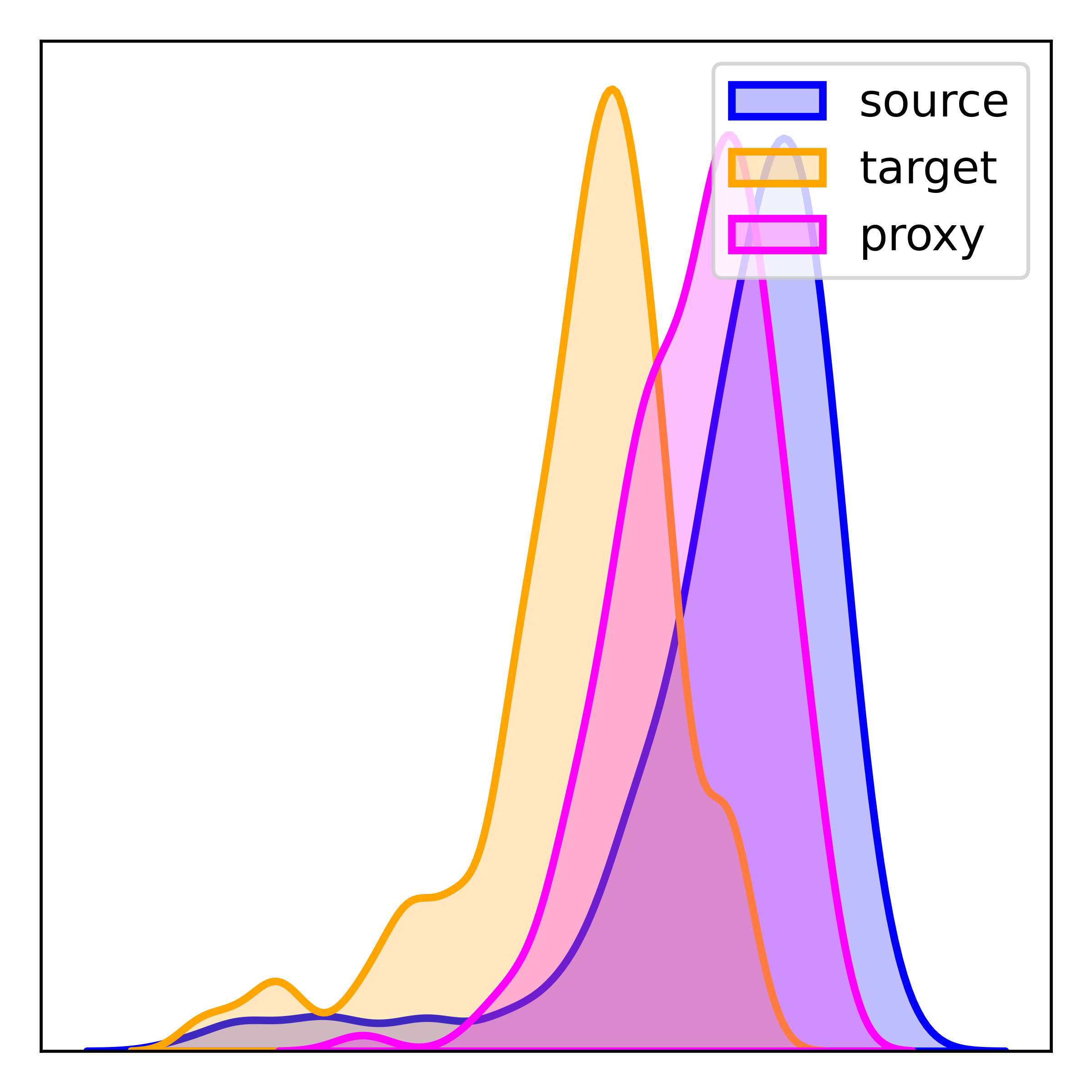}
    \caption{contract\_nli}
  \end{subfigure}
  \begin{subfigure}[t]{0.23\textwidth}
    \centering
    \includegraphics[width=\textwidth]{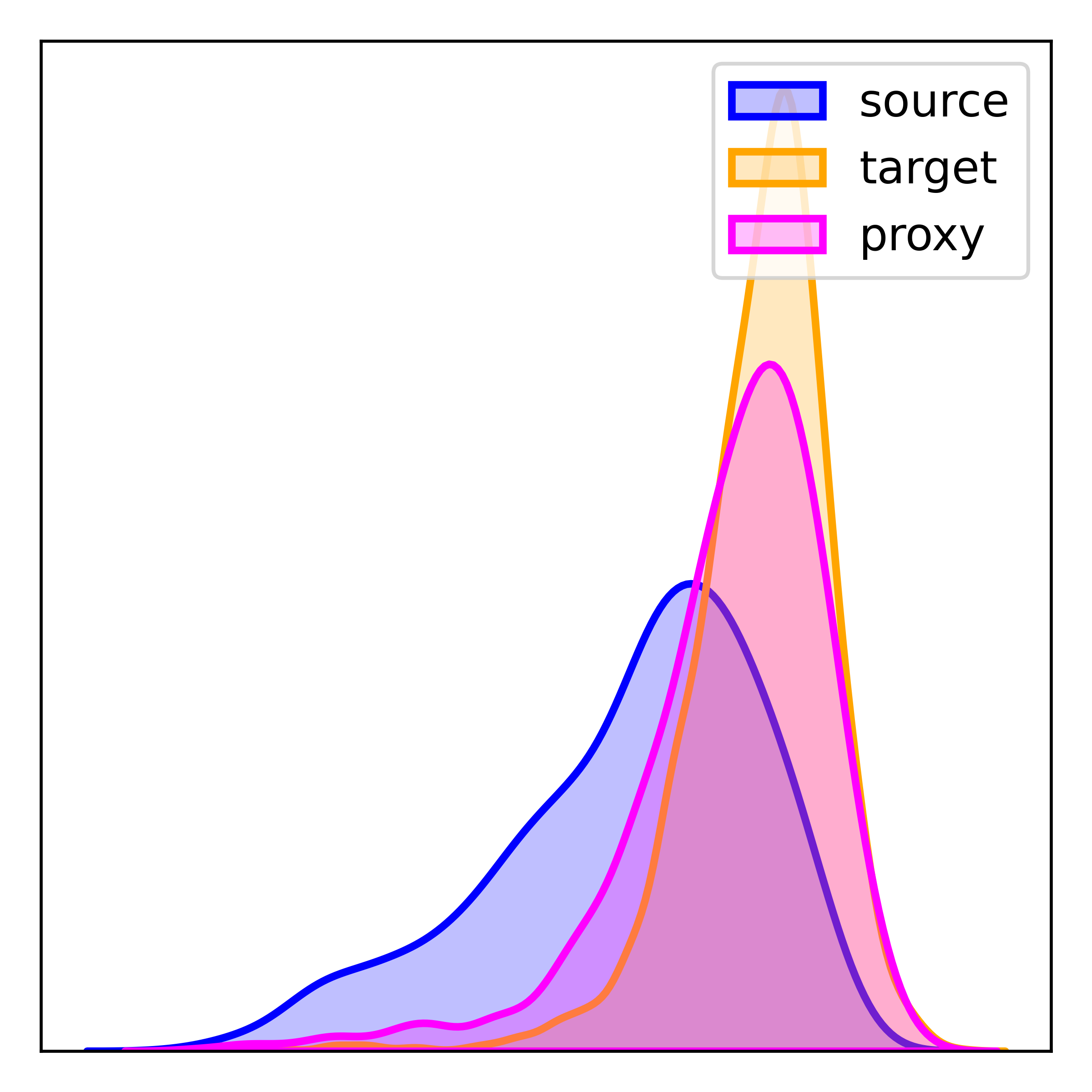}
    \caption{wanli}
  \end{subfigure}
  \caption{Different KDE visualization results on all classification tasks. }
  \label{fig:vis_all}
\end{figure*}

\section{Experimental Details}  \label{app:details}

\subsection{Dataset Details}
We focus on four core NLP tasks from BOSS~\cite{YuanCCGZCJL023}, a benchmark suite specifically designed to evaluate the robustness of language models under OOD scenarios: Sentiment Analysis (SA), Toxic Detection (TD), Natural Language Inference (NLI), and Named Entity Recognition (NER). To balance the number of samples, we randomly select 3,000 training samples per class from the original in-distribution dataset for SA and NLI, and 5,000 training samples per class for TD. Accordingly, for testing, we randomly sample up to 1,000 instances per class from the target domain for SA and NLI, and 1,500 test samples per class for TD. For the NER task, we select 10,000 samples from source dataset that contain only ``Location", ``Organization", or ``Person" entities to unify the label space and select all eligible samples from the target domain for testing. In our experiments, we do not use the conll dataset because it contains a large number of annotation errors, which could lead to unreliable and unmeasurable outcomes for model evaluation.

\subsection{Baseline Details}
We provide a detailed introduction of the baseline methods used in this section. 
\begin{itemize}
    \item Random~\cite{PengDY00OT24}. We randomly select the required number of samples from the source domain to construct demonstrations. To reduce performance variance caused by randomness, we repeat this process five times and report the average results for comparison.

    \item KNN~\cite{LiuSZDCC22}. We use the SimCSE representations of samples as the retrieval basis and construct demonstrations by selecting the top nearest samples to the test sample in the representation space.

    \item DrICL~\cite{Man2023}. We first use KNN to select the top 30 candidate samples that are most similar to the test sample. These candidates are then ranked by quantifying their individual contributions to the LLM’s actual predictions (We use LLaMA3.2-3B in LLMs that can not be deployed locally). The top 10 are treated as positive examples and the bottom 10 as negative ones to train a dual-encoder neural retriever, GTR~\cite{ni2022large}, which is subsequently used for demonstration retrieval.

    \item Rewrite~\cite{Madine24}. We perform KNN-based demonstration retrieval and rewrite the retrieved samples according to the style of the test sample, so that the demonstrations better align with the target domain. In contrast to the original method, we adapt the rewriting strategy under a strict target-unavailable setting, where only a single test instance is exposed at a time, rather than a set of target samples.

    \item InfICL~\cite{Vinay2024}. It estimates the influence of each candidate demonstration on the model’s prediction for a given test input, and to select those demonstrations that have the most beneficial effect. By leveraging gradient-based influence approximations, the method identifies which demonstrations most positively affect the model’s output distribution without requiring extensive evaluation over all combinations. 

    \item DICL~\cite{KapuriyaKGB25}. It employs Maximum Marginal Relevance (MMR), which jointly considers the relevance between the input and samples as well as the mutual dissimilarity among them. This allows the selected context to maintain high relevance with greater diversity.
    
\end{itemize}

\subsection{More Experimental Settings} \label{app:setting}
To prevent potential bias caused by an imbalanced number of samples per label in the demonstrations, we retrieve the same number of samples $N$ for each label. Therefore, for classification tasks, the total number of demonstrations is $N \times |Y|$, where $|Y|$ is the number of labels. However, for generative tasks that do not involve specific class labels, we directly set the number of demonstrations to $N$. For classification tasks, we set the number of demonstrations $C$ in the initial demonstration set to $|Y|$, while for generative tasks, we directly set $C$ to 1. For instruction fine-tuning, we use the source domain data and convert it into training samples following the instruction format of BOSS. During training, we apply LoRA with a learning rate of 1e-5 for one epoch. For \texttt{GPT4o-mini} and \texttt{GPT3.5-turbo}, we use the APIs provided by xi-ai\footnote{https://api.xi-ai.cn/}. For experiments involving closed-source LLMs (e.g., GPT-based APIs), we use Llama 3.2-3B as the proxy model for both the source and target domains. While this proxy cannot perfectly replicate the behavior of the closed-source model, it provides a practical and consistent reference when internal representations and training access are unavailable. 

To compare the performance of DOPA and baselines across multiple datasets, we employ the Wilcoxon Signed-Rank Test which is widely used for model comparison across multiple benchmarks~\cite{Demsar06}. This non-parametric statistical test is specifically designed for paired samples and does not assume normality of the underlying distribution. In our setting, the paired observations correspond to the performance scores of the two models (DOPA and any other baseline) on the same datasets. If DOPA shows statistically significant improvements ($p \leq 0.05$) over all baselines, we denote it as DOPA*.

Model Parameters Download: \texttt{GPT2-xl}, \texttt{Qwen3-1.7B}\footnote{https://huggingface.co/Qwen/Qwen3-1.7B}, \texttt{Qwen3-8B}\footnote{https://huggingface.co/Qwen/Qwen3-8B}, \texttt{Gemma2-2B}\footnote{https://huggingface.co/google/gemma-2b}, and \texttt{LLaMA3.2-3B}\footnote{https://huggingface.co/meta-llama/Llama-3.2-3B}, \texttt{LLaMA3.1-8B}\footnote{https://huggingface.co/meta-llama/Llama-3.1-8B}.

\section{Proxy Sensitivity}

\begin{table*}[h]
\centering
\small
\begin{tabular}{l l c c c c}
\toprule
\textbf{Source Proxy} & \textbf{Target Proxy} & \textbf{dynasent} & \textbf{semeval} & \textbf{sst} & \textbf{average} \\
\midrule
\texttt{Llama3.2-3B} & \texttt{Llama3.2-3B} & 55.71 & 53.28 & 68.88 & 59.29 \\
\texttt{Llama3.2-3B} & \texttt{Llama3.1-8B} & 55.38 & 50.28 & 66.45 & 57.37 \\
\texttt{Llama3.2-3B} & \texttt{Qwen3-1.7B}  & 56.54 & 45.86 & 67.76 & 56.72 \\
\midrule
\multicolumn{2}{c}{KNN}  & 52.63 & 45.76 & 65.42 & 54.60 \\
\bottomrule
\end{tabular}
\caption{Performance comparison under different target proxy settings.}
\label{tab:proxy_results}
\end{table*}

We evaluate the proxy sensitivity of DOPA by replacing the target proxy with LLMs that differ from those in the source proxy. Specifically, we show the replacement results using \texttt{Llama3.1-8B} and \texttt{Qwen3-1.7B} for the SA task in Table~\ref{tab:proxy_results}. The results show that while proxy mismatches may affect performance, DOPA still consistently outperforms KNN. This provides further evidence supporting the robustness of the proxy-based design. Additionally, we found that using models from different series (\texttt{Qwen3-1.7B}) yields inferior replacement results compared to using models from the same series (\texttt{Llama3.1-8B}), indicating that the source domain and target domain proxies should be as similar as possible to ensure accurate quantification of the characteristics of different samples.

\section{Experiments on More Tasks}

\begin{table*}[htbp]
\centering
\small
\begin{tabular}{llcc|cc|cc|c}
\toprule
\multirow{2.5}{*}{\textbf{Model}} & \multirow{2.5}{*}{\textbf{Method}} 
& \multicolumn{2}{c|}{\textbf{advQA}} 
& \multicolumn{2}{c|}{\textbf{searchQA}} 
& \multicolumn{2}{c|}{\textbf{newsQA}} 
& \multirow{2.5}{*}{\textbf{Avg}} \\
\cmidrule(lr){3-4}\cmidrule(lr){5-6}\cmidrule(lr){7-8}
& & EM & F1 & EM & F1 & EM & F1 & \\
\midrule
\multirow{2}{*}{\texttt{LLaMA3.2-3B}}
& KNN  & 30.5 & 54.94 & 33.1  & 52.46 & 32.0 & 58.61 & 43.60 \\
& DOPA & 31.5 & 55.49 & 34.5  & 53.72 & 33.2 & 59.97 & 44.73 \\
\midrule
\multirow{2}{*}{\texttt{Qwen3-1.7B}}
& KNN  & 31.7 & 51.21 & 39.35 & 51.96 & 37.3 & 61.45 & 45.50 \\
& DOPA & 32.7 & 52.65 & 40.95 & 52.56 & 37.8 & 61.49 & 46.35 \\
\bottomrule
\end{tabular}
\caption{Experimental results on QA tasks.}
\label{tab:QA}
\end{table*}

We further briefly evaluate the effectiveness of DOPA on extractive Question Answering (QA) tasks in the BOSS Benchmark. During this process, 10,000 samples are drawn from each dataset, regardless of whether it belongs to the source domain or the target domain. We adopt Exact Match (EM) and F1 as evaluation metrics. EM measures whether the predicted answer exactly matches the ground-truth span, reflecting the model's strict answer prediction accuracy. F1 computes the token-level overlap between the predicted and gold answers, providing a softer evaluation of partial matching quality and semantic coverage.

As shown in Table \ref{tab:QA}, DOPA consistently outperforms the KNN baseline across all datasets and model settings, demonstrating its effectiveness on extractive QA tasks. Specifically, for LLaMA3.2-3B, DOPA improves the average performance from 43.60 to 44.73, yielding a gain of 1.13 points. For Qwen3-1.7B, DOPA boosts the average score from 45.50 to 46.35, achieving an improvement of 0.85 points. A closer examination of individual datasets further shows that DOPA delivers consistent improvements on advQA, searchQA, and newsQA, indicating strong robustness across diverse domains and question distributions. These results validate that the effectiveness of DOPA extends beyond classification and NER tasks and generalizes well to more challenging extractive QA scenarios.

\section{Case Study}
\begin{figure}[h]
    \centering
    \includegraphics[width=1\linewidth]{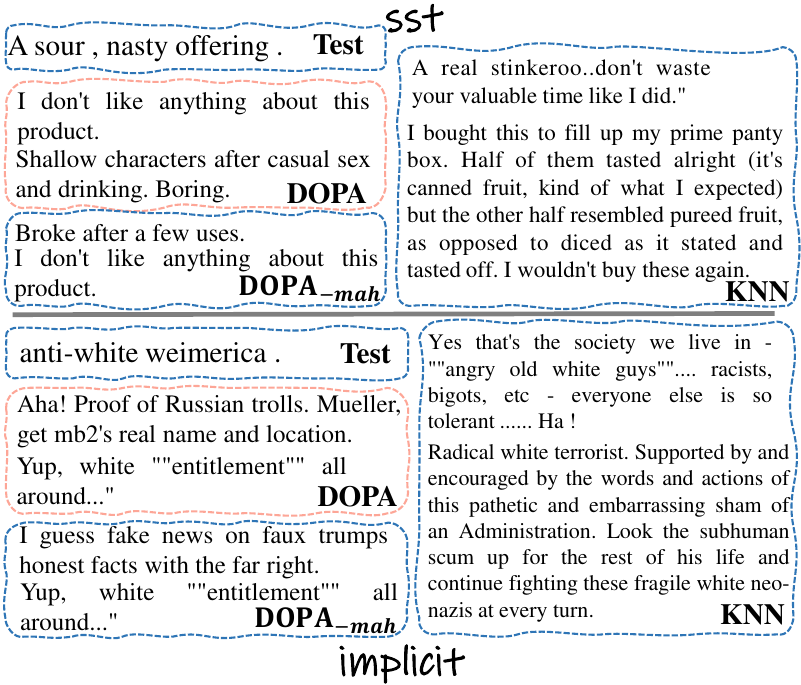}
    \caption{Case study on sst and implicit.}
    \label{fig:case}
\end{figure}
The examples in Figure~\ref{fig:case} illustrate that both DOPA and DOPA$_{-mah}$ select samples with stylistic expressions closely aligned with the test inputs, capturing similar tone, sentence structure, and emotional/toxicity intensity. However, DOPA demonstrates slightly better diversity: in sst, while both methods retrieve strongly negative, concise opinions, DOPA’s samples vary slightly more in content and phrasing. In the implicit task, both methods capture politically charged and provocative language, but DOPA avoids redundancy by selecting stylistically consistent yet semantically distinct sentences. In contrast, KNN selects samples that, although semantically related, deviate significantly in style—favoring longer or expository sentences that mismatch the terse nature of the test examples. Overall, DOPA achieves stronger style alignment with greater diversity, while KNN struggles to capture the nuanced stylistic cues of the target domain.

\begin{figure*}[h]
  \centering
  \begin{subfigure}[t]{0.48\textwidth}
    \centering
    \includegraphics[width=\textwidth]{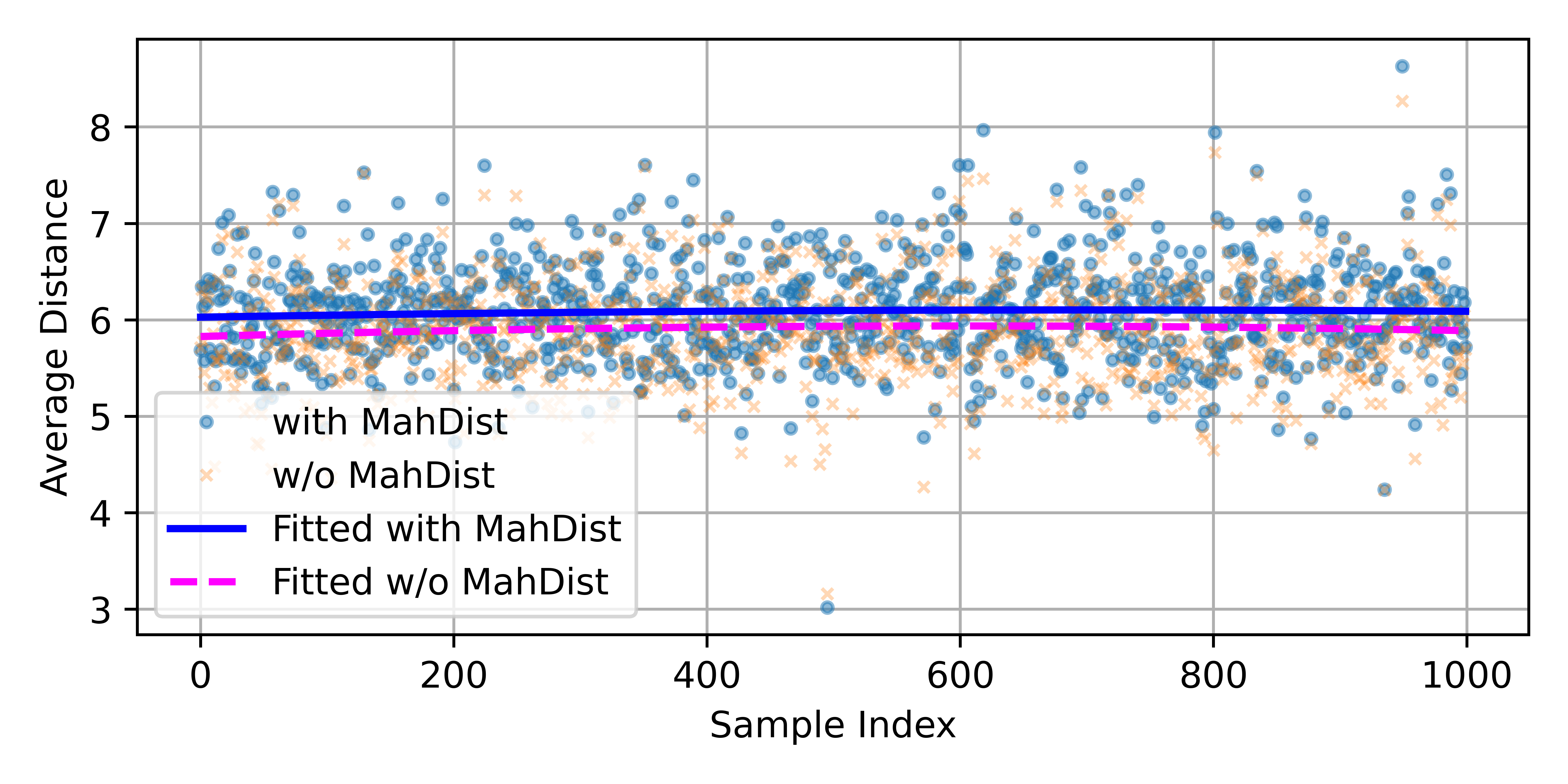}
    \caption{dynasent}
  \end{subfigure}
  \begin{subfigure}[t]{0.48\textwidth}
    \centering
    \includegraphics[width=\textwidth]{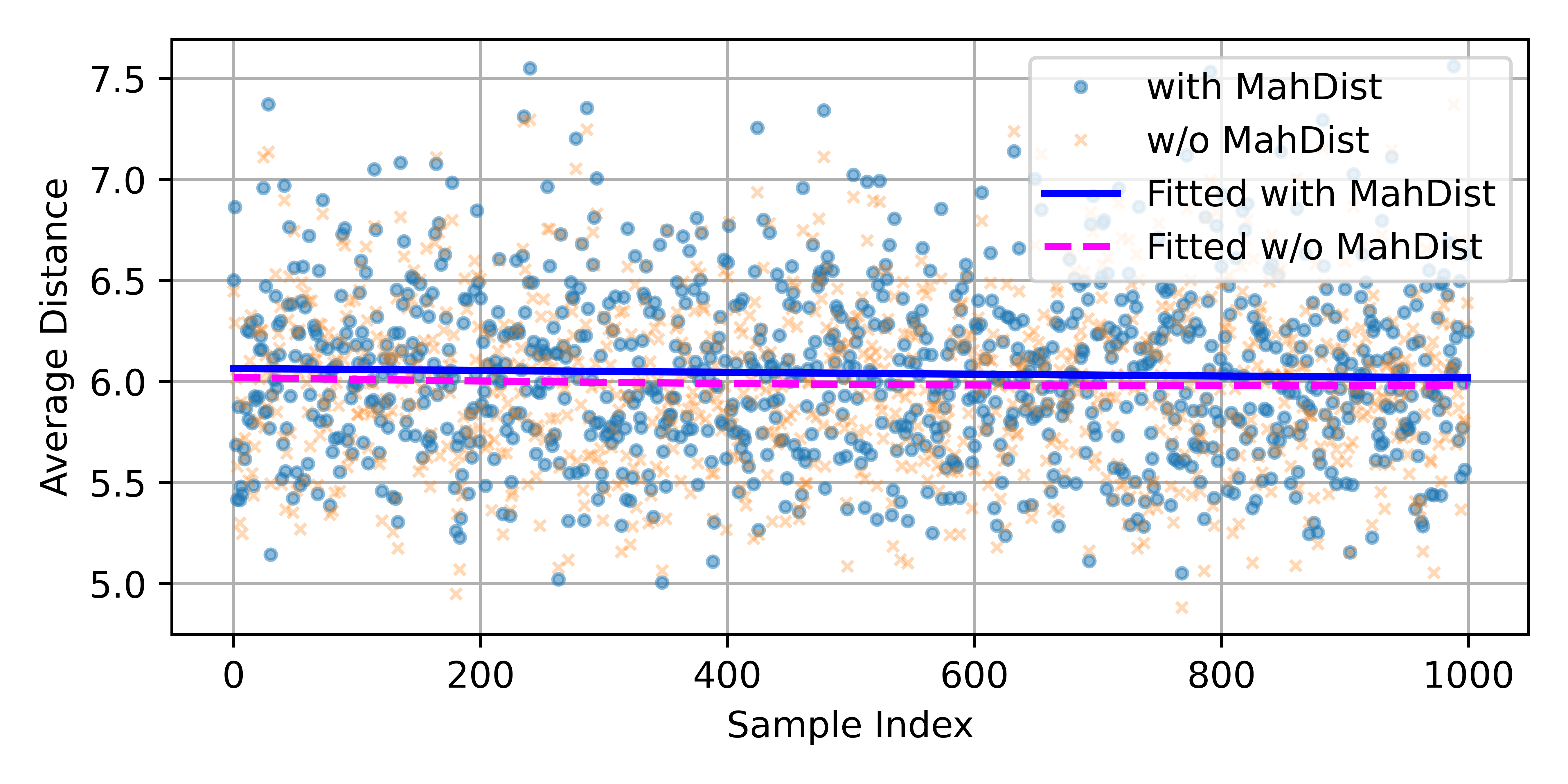}
    \caption{semeval}
  \end{subfigure}

    \begin{subfigure}[t]{0.48\textwidth}
    \centering
    \includegraphics[width=\textwidth]{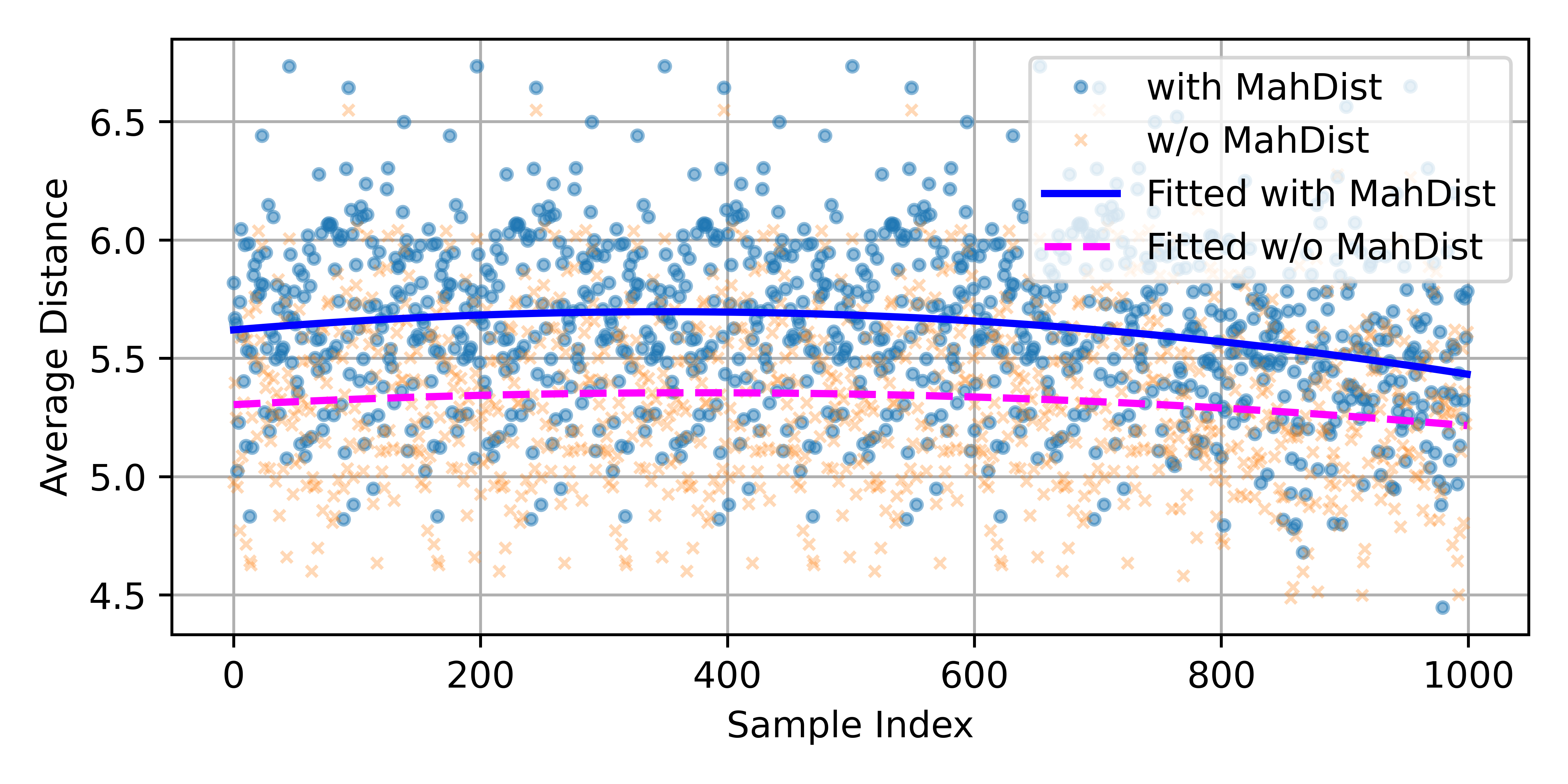}
    \caption{adv\_civil}
  \end{subfigure}
  \begin{subfigure}[t]{0.48\textwidth}
    \centering
    \includegraphics[width=\textwidth]{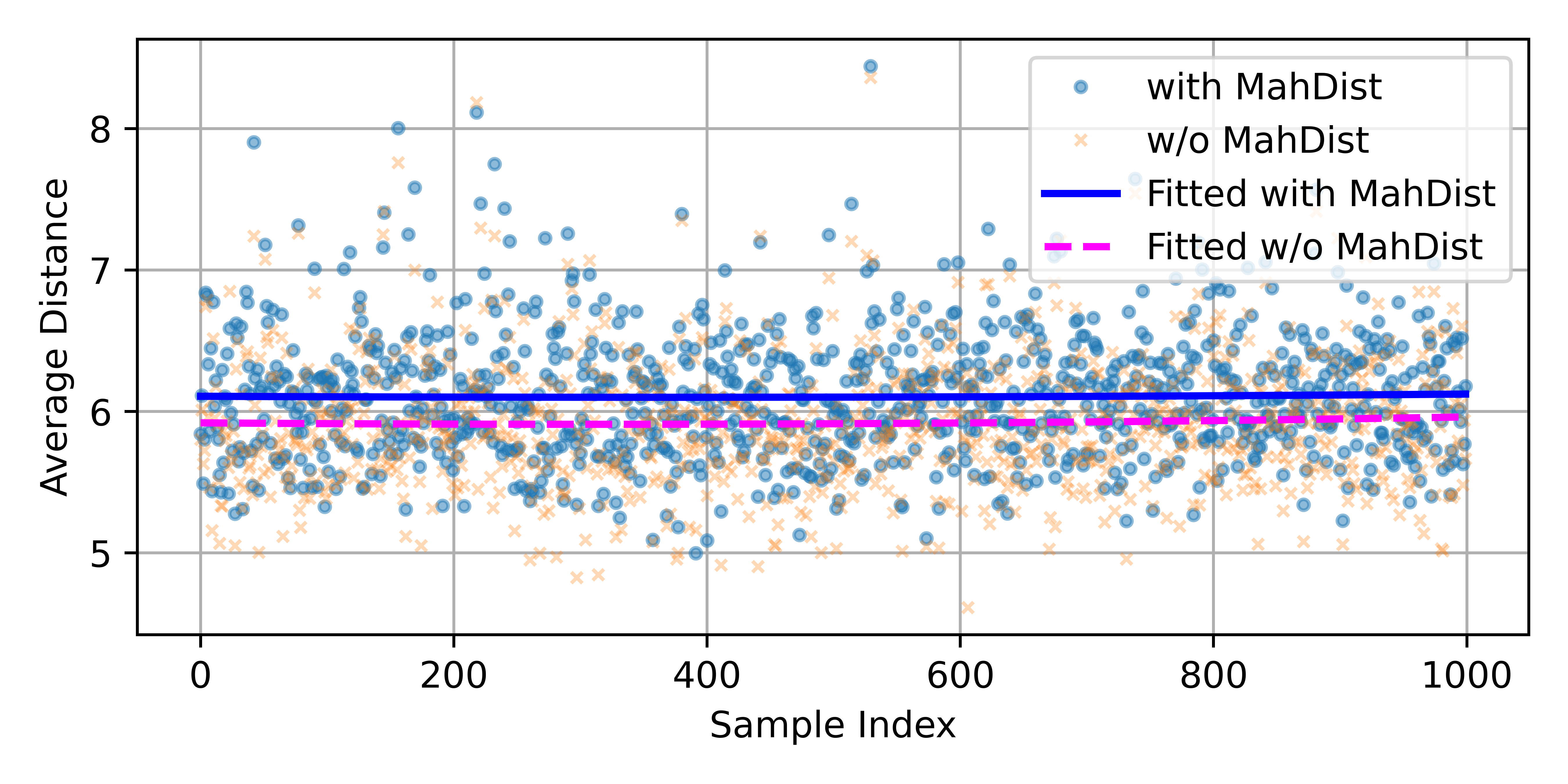}
    \caption{implicit\_hate}
  \end{subfigure}

  \begin{subfigure}[t]{0.48\textwidth}
    \centering
    \includegraphics[width=\textwidth]{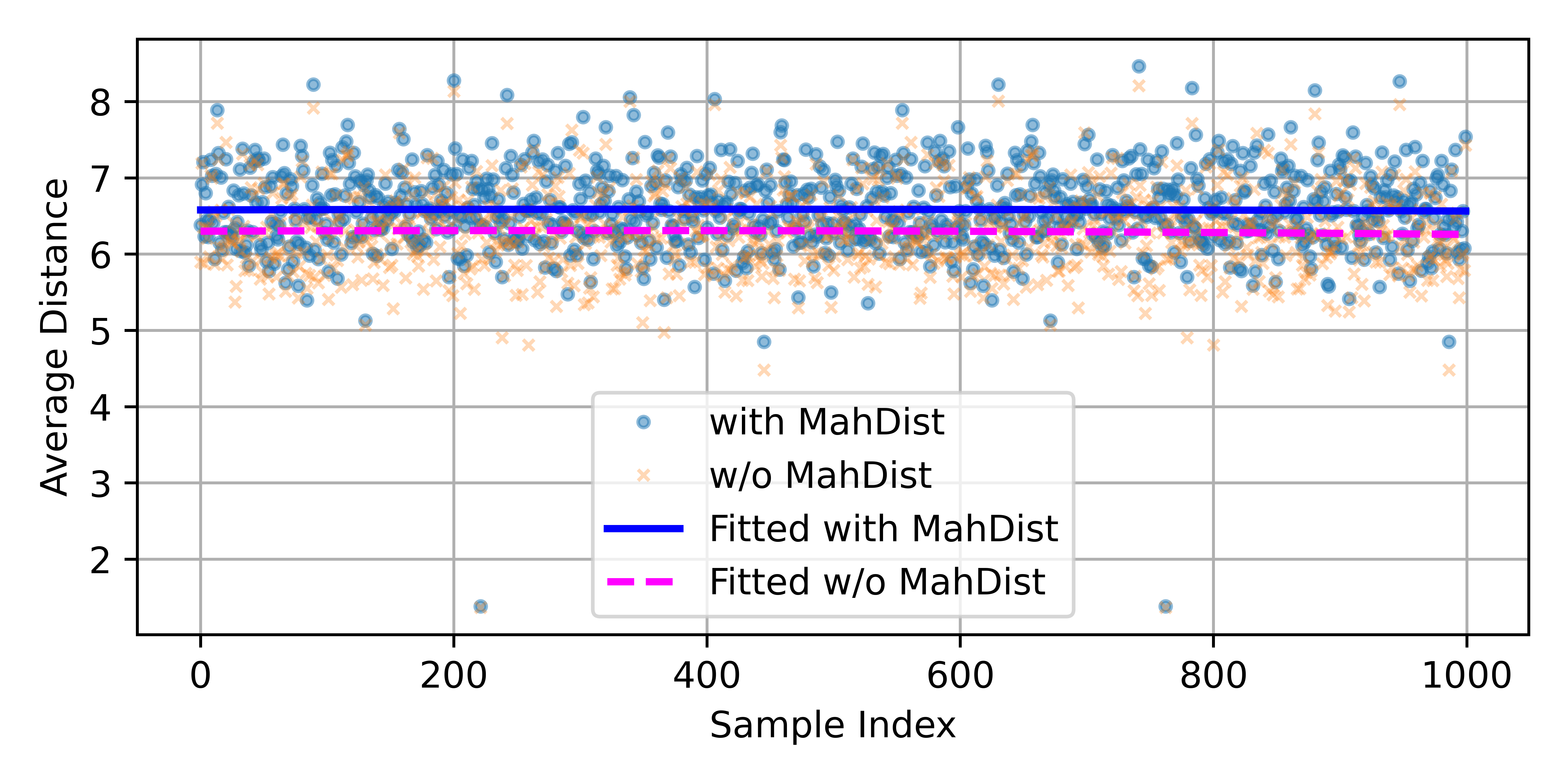}
    \caption{toxigen}
  \end{subfigure}
    \begin{subfigure}[t]{0.48\textwidth}
    \centering
    \includegraphics[width=\textwidth]{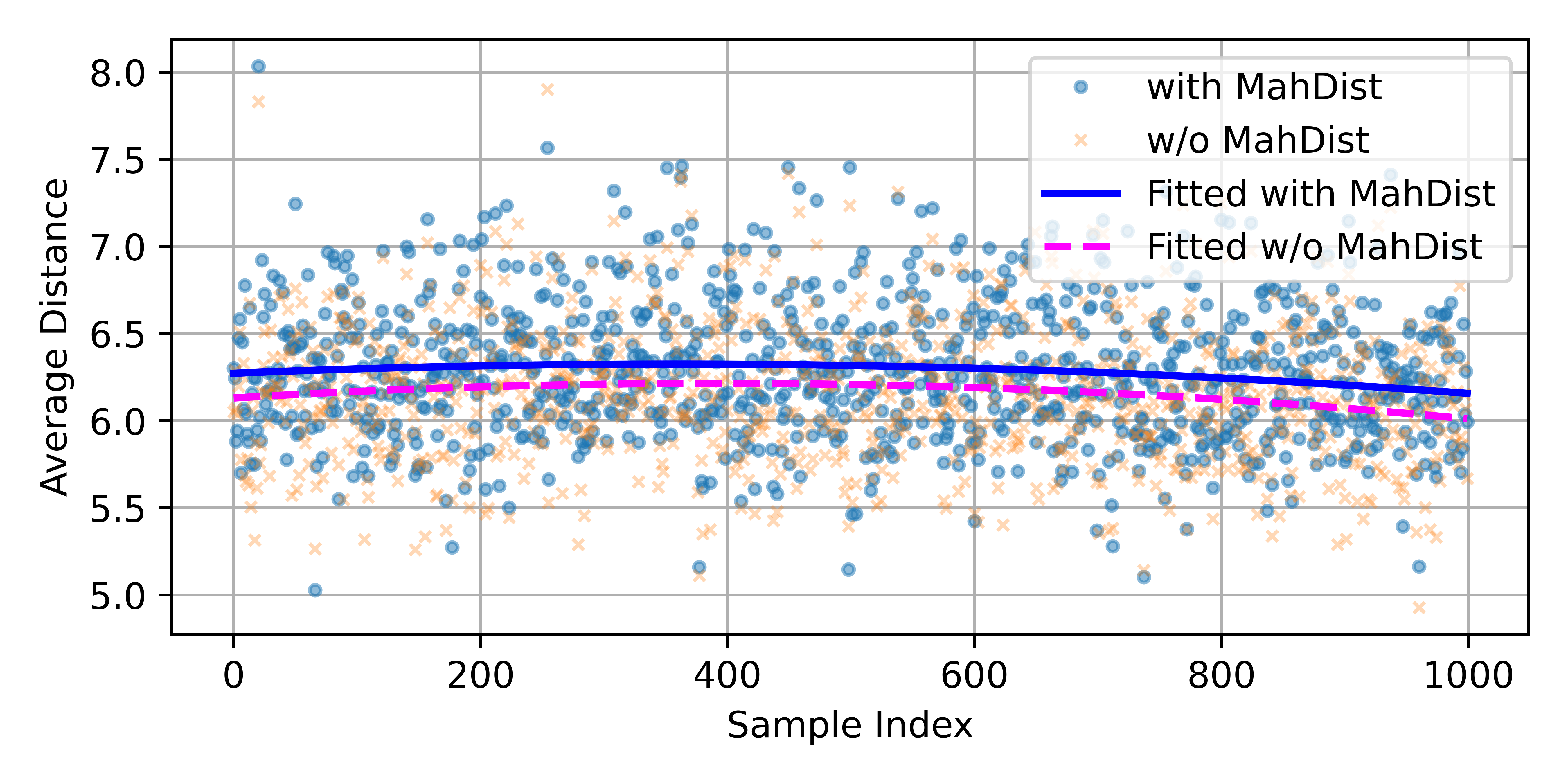}
    \caption{anli}
  \end{subfigure}

  \begin{subfigure}[t]{0.48\textwidth}
    \centering
    \includegraphics[width=\textwidth]{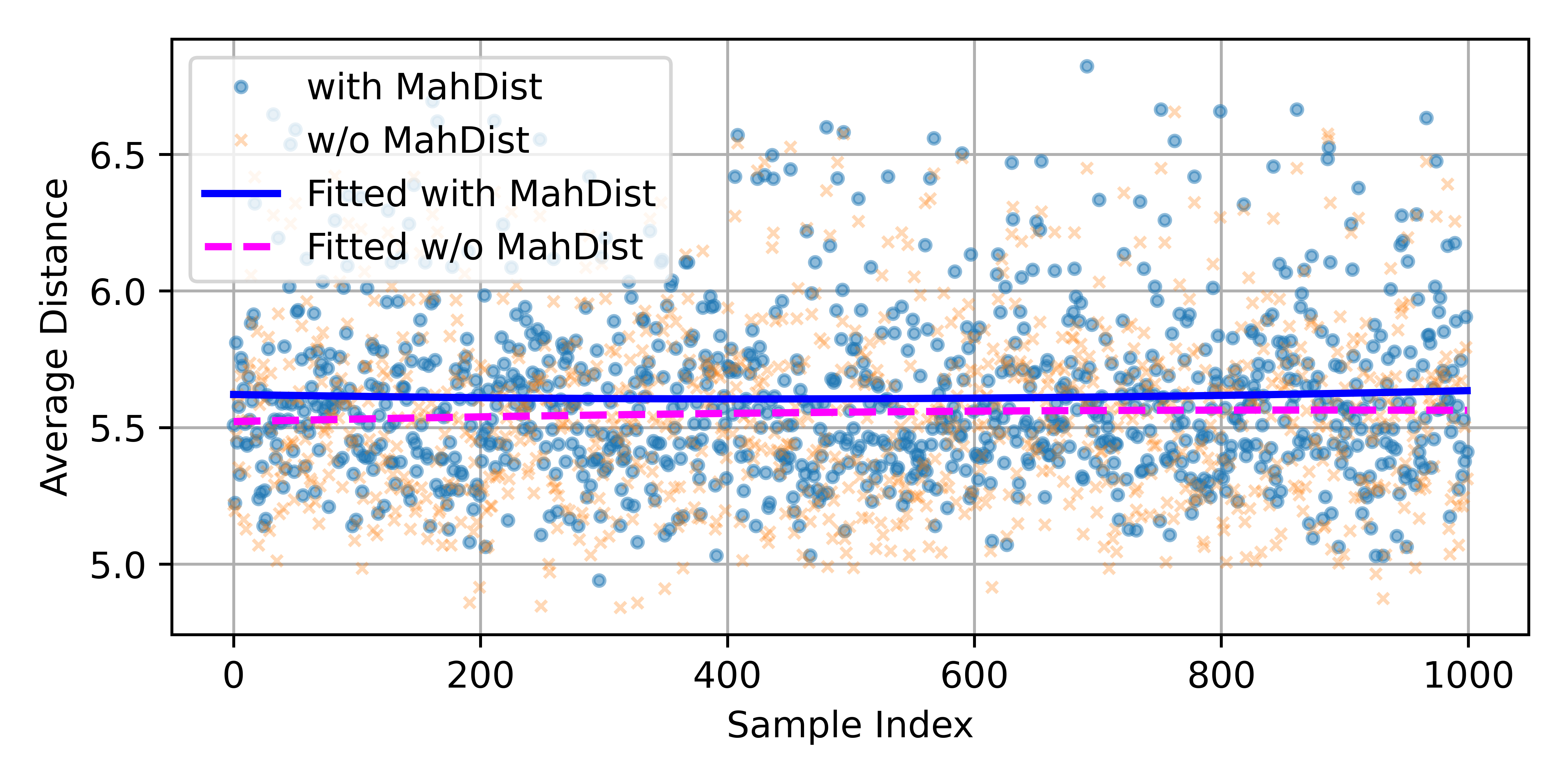}
    \caption{contract\_nli}
  \end{subfigure}
  \begin{subfigure}[t]{0.48\textwidth}
    \centering
    \includegraphics[width=\textwidth]{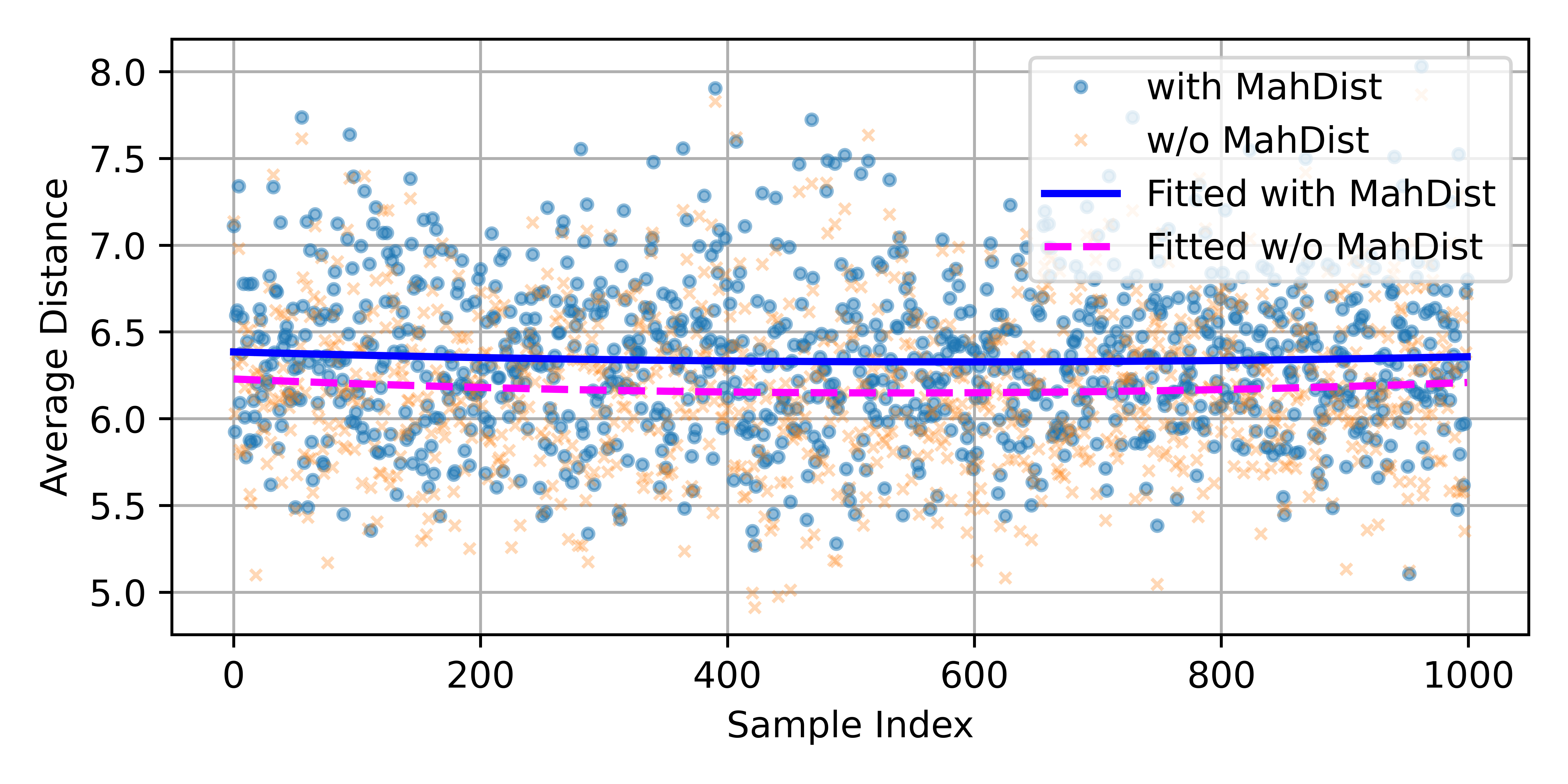}
    \caption{wanli}
  \end{subfigure}
  \caption{Euclidean distance comparison to target domain samples for retrieval results with and without the diversity constraint (with MahDist and w/o MahDist) on all tasks.}
  \label{fig:vis_all2}
\end{figure*}

\section{More Visualization Results} \label{app:vis}
We further present KDE distributions of sample representations across various tasks in Figure~\ref{fig:vis_all} to demonstrate the generality of DOPA in selecting appropriate samples. Overall, the samples selected by the proxy consistently exhibit a distribution that shifts away from the source domain and moves closer to the target domain. For example, on the \textit{implicit\_hate} dataset, the proxy-based distribution almost completely overlaps with that of the target domain. This demonstrates DOPA’s capability to effectively identify samples with similar underlying distributions to the target domain. But we also observe that in a few cases (e.g., \textit{anli}), the proxy-based distribution fails to effectively deviate from the source domain. This may be attributed to the nature of \textit{anli} itself, which is a human-crafted adversarial benchmark, making it challenging for the model to accurately capture its characteristics. We do not perform the corresponding visualization experiments on the NER dataset because it is not a sentence-level task, making it difficult to obtain the relevant probability distributions. Overall, DOPA’s ability to capture approximate distributions exhibits generalization and can perform well across most tasks.

Figure~\ref{fig:vis_all2} illustrates the effect of the diversity constraint across additional datasets. Similar to Figure~\ref{fig:sst_vis2}, the curve fitted under the with MahDist setting demonstrates greater diversity. Together with the ablation study, this provides strong evidence for the effectiveness of the diversity constraint component in DOPA.

\end{document}